\pdfoutput=1
\documentclass[11pt,dvipsnames]{article}
\usepackage{nicefrac}
\usepackage{ssArxiv}

\usepackage{graphicx}
\usepackage{subfigure}

\usepackage{algorithm}
\usepackage{algorithmic}
\usepackage{natbib}
\usepackage{eso-pic}
\usepackage{forloop}
\usepackage[utf8]{inputenc}
\usepackage[T1]{fontenc} 
\usepackage{booktabs}       %
\usepackage{amsmath,amsbsy,amsgen,amscd,amssymb,amsthm,amsfonts}

\usepackage{mathtools}
\usepackage{bm}
\usepackage{enumitem}
\usepackage{parskip}
\usepackage{threeparttable}
\usepackage{booktabs}
\usepackage{array}
\usepackage{multirow}
\usepackage{graphicx}
\usepackage{threeparttable}
\usepackage{adjustbox}
\usepackage{hyperref}
\usepackage[nameinlink]{cleveref}

\newcommand{\R}{\mathbb{R}}

\usepackage{setspace}
\usepackage{tikz}
\usepackage[dvipsnames]{xcolor}
\usepackage{pgfplots}
\usetikzlibrary{arrows.meta,calc,patterns,angles,quotes}
\usepgfplotslibrary{groupplots}
\usepgfplotslibrary{fillbetween}
\pgfplotsset{compat=1.18}
\usepackage{subcaption}

\definecolor{mybrown}{RGB}{150,100,55}
\definecolor{myolive}{RGB}{120,130,0}
\definecolor{mypurple}{RGB}{160,0,160}
\definecolor{mycolor1}{rgb}{0,0.44,0.74}%
\definecolor{mycolor2}{rgb}{0,0.59,0.2}%
\definecolor{mycolor3}{rgb}{0.85,0.325,0.098}%

\DeclareMathOperator*{\argmax}{argmax}

\newcommand{\actionset}[1]{\mathbb{A} ({#1})}
\newcommand{\DrawStripAsym}[5]{%
  \pgfmathsetmacro{\aa}{#1}
  \pgfmathsetmacro{\bb}{#2}
  \pgfmathsetmacro{\cl}{#3}
  \pgfmathsetmacro{\ch}{#4}

  \draw[dashed, line width=1pt , #5]
    ({\ch*\aa - \LineExt*\bb},{\ch*\bb + \LineExt*\aa})
    --
    ({\ch*\aa + \LineExt*\bb},{\ch*\bb - \LineExt*\aa});

  \draw[dashed, line width=1pt, #5]
    ({\cl*\aa - \LineExt*\bb},{\cl*\bb + \LineExt*\aa})
    --
    ({\cl*\aa + \LineExt*\bb},{\cl*\bb - \LineExt*\aa});
}

\theoremstyle{plain}
\newtheorem{theorem}{Theorem}[section]
\newtheorem{proposition}[theorem]{Proposition}
\newtheorem{lemma}[theorem]{Lemma}
\newtheorem{remark}[theorem]{Remark}

\newtheorem{definition}[theorem]{Definition}
\newtheorem{example}{Example}
\theoremstyle{definition}
\newtheorem{assumption}{Assumption}

\usepackage{xspace}

\definecolor{mycolor2}{rgb}{0,0.59,0.2}%

\definecolor{light-gray}{gray}{0.9}
\definecolor{darkblue}{rgb}{0.0,0.0,0.65}
\definecolor{darkred}{rgb}{0.2,0.0,0.0}
\definecolor{mygreen}{RGB}{0,100,0}
\hypersetup{
   colorlinks = true,
   citecolor  = darkblue,
   linkcolor  = darkred,
   filecolor  = darkblue,
   urlcolor   = darkblue,
 }

\definecolor{stripgreen}{RGB}{0,210,0}
\definecolor{stripblue}{RGB}{0,0,220}
\definecolor{stripbrown}{RGB}{140,90,45}
\definecolor{stripyellow}{RGB}{220,205,35} \usepackage{booktabs}       %
\usepackage{amsfonts}       %
\usepackage{nicefrac}       %
\usepackage{microtype}      %
\usepackage{mathtools}
\usepackage{amsmath}
\usepackage{amssymb}
\usepackage{mathtools}
\usepackage{amsthm}
\usetikzlibrary{arrows.meta}
\usepackage{microtype}
\usepackage{graphicx}
\usepackage{subcaption}
\usepackage{stmaryrd}

\usepackage{booktabs} %
\usepackage[utf8]{inputenc} %
\usepackage{caption}
\usepackage[T1]{fontenc}    %
\title{\Large Tight Generalization Bounds for Noiseless Inverse Optimization \\ \qquad \\}
\author{\name{Pouria Fatemi}
\email{pouria.fatemi@tum.de} \\
\addr{Technical University of Munich, Germany} \\[0.5em]
\name{Hoomaan Maskan}
\email{hoomaan.maskan@it.uu.se} \\
\addr{Uppsala University, Sweden} \\[0.5em]
\name{Suvrit Sra} 
\email{s.sra@tum.de} \\
\addr{Technical University of Munich, Germany} \\[0.5em]
\name{Peyman Mohajerin Esfahani} 
\email{p.mohajerinEsfahani@utoronto.ca} \\
\addr{University of Toronto, Canada} \\[0.5em]
}

\definecolor{darkblue}{rgb}{0.10,0.10,0.75}
\definecolor{darkred}{rgb}{0.68,0.05,0.0}
\definecolor{darkgreen}{rgb}{0.0,0.29,0.29}
\definecolor{darkpurple}{rgb}{0.47,0.09,0.29}
\hypersetup{
   colorlinks = true,
   citecolor  = darkblue,
   linkcolor  = darkred,
   filecolor  = darkblue,
   urlcolor   = darkblue,
}

\begin{document}

\maketitle

\begin{abstract}
Inverse optimization (IO) seeks to infer the parameters of a decision-maker's objective from observed context--action data. We study \emph{noiseless IO}, where demonstrations are generated by a ground-truth objective.
We provide a high-probability $\mathcal{O}(\nicefrac{d}{T})$ generalization bound for the induced action set, where $d$ is the number of unknown parameters and $T$ is the size of the training dataset. We strengthen these guarantees under additional conditions that ensure uniqueness of the chosen action, bringing our IO guarantees in line with best-arm identification results in the bandit literature. 
We further show that the $\mathcal{O}(\nicefrac{d}{T})$ rate is tight over all consistent estimators considered here, and extend the result to both instantaneous and cumulative regret. Notably, the resulting regret lower bound matches the corresponding upper bounds in the adversarial setting, indicating that the stochastic IO setting is effectively adversarial for the class of estimators studied here. Finally, we propose a parameter-free algorithm with lower per-iteration complexity than generic solvers. Experiments validate the predicted rates and illustrate the tightness of our bounds.

\end{abstract}

\section{Introduction}
\label{sec:intro}

We consider the contextual decision-making problem
\begin{equation}\label{eqn:main}
a_{\theta}(s)\in \mathcal{A}_\theta(s)
:=\argmax_{a \in \actionset{s}} F_{\theta}(s,a) ,
\end{equation}
where $s \in \mathcal{S}$ is a random context (state) drawn from a distribution $\mathbb{P}_{\mathcal{S}}$ over $\mathcal{S}$, $\actionset{s}$ is the set of feasible actions given~$s$, and
$F_{\theta}(s,a)= \langle \theta,\psi(s,a)\rangle$ is a linear score parameterized by $\theta \in \mathbb{R}^d$. 
We define $\mathcal{A}_\theta(s)$ as the set of optimal actions for a context $s$ under parameter $\theta$ and assume that it is non-empty. We also define $\psi:\mathcal S\times\mathbb A\to\R^{d}$ as a generic feature map where $\mathbb{A} := \cup_{s\in \mathcal{S}} \; \actionset{s}$. 

Suppose there is an expert who observes a context \(s\) and then selects an action \(a_{\theta^\star}(s)\) according to \eqref{eqn:main} using a parameter \(\theta^\star\) that is unknown to a learner. But the learner gets to observe $T$ demonstrations
\[
D_T := \big\{(s_t,a_t^\star)\big\}_{t=1}^T, \; a_t^\star := a_{\theta^\star}(s_t),
\]
with \((s_t)_{t=1}^T\) drawn i.i.d.\ from \(\mathbb{P}_{\mathcal{S}}\), and wishes to use them to mimic the expert's actions.\footnote{We have assumed that the expert's optimal action is unique, i.e., $\mathcal{A}_{\theta^\star}(s)=\{a_{\theta^\star}(s)\}$ for all $s$. If $\mathcal{A}_{\theta^\star}(s)$ is not a singleton, all of our generalization bounds can be written in the form of $\mathbb{P}_{\mathcal{S}}\big(\mathcal{A}_{\theta^\star}(s)\cap \mathcal{A}_{\hat\theta}(s)=\emptyset\big)$.} 

This learning paradigm is commonly referred to as \emph{inverse optimization} (IO)~\citep{ahuja2001inverse}. The goal of IO is to infer a parameter \(\theta\) such that the corresponding actions generalize to unseen contexts drawn from~$\mathbb{P}_{\cal S}$. Using the demonstrations, the learner outputs an estimate \(\hat\theta\), and thus induces a \emph{set-valued} greedy map \(\mathcal{A}_{\hat\theta}(s)\). Natural measures to judge a learner's generalization performance~are
\begin{subequations}
\label{eq:risk}
\begin{align}
\label{eq:set-risk}
\text{Set-level mismatch:} & \qquad  \mathbb{P}_{\mathcal{S}}\big(a_{\theta^\star}(s)\notin \mathcal{A}_{\hat\theta}(s)\big), \\
\label{eq:action-risk}
\text{Action-level mismatch:} & \qquad \mathbb{P}_{\mathcal{S}}(a_{\theta^\star}(s)\neq a_{\hat\theta}(s)),
\end{align}
\end{subequations}
where \eqref{eq:set-risk} measures how often the learned model excludes the expert's demonstrated action on a fresh state $s\sim\mathbb{P}_{\mathcal{S}}$, while \eqref{eq:action-risk} is a stronger notion, which is also known as \emph{best-arm identification} in the bandit literature~\citep{lattimore2020bandit}. 

IO has been extensively studied from algorithmic, modeling, and optimization perspectives~\citep{chan2025inverse}, and has witnessed a recent surge of interest, e.g., in learning theory \citep{mohajerin2018data,aswani2018inverse,ren2025inverse},  in reinforcement learning \citep{zattoni2025inverse}, for practical problems in control \citep{akhtar2021learning}, robotics \citep{dimanidis2025offline}, routing \citep{zattoni2025inverse}, and finance \citep{li2021inverse}. 

Despite this interest, IO's statistical foundations are far less developed. Since IO can be viewed as a supervised learning problem, it is natural to seek generalization guarantees~\eqref{eq:risk} that characterize its out-of-sample performance. Through the lens of~\eqref{eq:set-risk}, our problem can be seen as a binary classification problem, for which standard learning-theoretic tools could provide generalization guarantees (see \Cref{app:VC_IO}). Similarly, the action-level counterpart~\eqref{eq:action-risk} can be viewed as multiclass or structured prediction, whose PAC complexity is studied via the Natarajan dimension in finite-label settings and characterized by the DS dimension in general multiclass learning~\citep{natarajan1989learning,ben1995characterizations,shalev2014understanding,brukhim2022characterization,pabbaraju2026optimal}. Nonetheless, this viewpoint is limited, as these complexity notions can be hard to quantify and may yield conservative bounds that are often not tight. For instance, existing upper and lower bounds either exhibit gaps in their dependence on the key variables $(d,T)$ or scale with the number of actions, making them uninformative when the action space is infinite~\citep[Theorem~29.3]{shalev2014understanding}.

To the best of our knowledge, this is the first work that addresses {\em tight} generalization guarantees in IO. With this background, let us summarize our key contributions. 

\begin{enumerate}[label=(\roman*), itemsep = 0mm, topsep = 0mm, leftmargin = 5mm]

\item \textbf{Generalization upper bounds.}
We develop a \emph{scenario program} view of noiseless IO, where each
demonstration induces a random convex feasibility constraint on the parameter $\theta$ \citep{campi2008exact}. This yields a high-probability $\mathcal{O}(d/T)$ bound on
set-level mismatch (\Cref{thm:scenario-nonexclusion-unique}). We further
upgrade this guarantee to action-level mismatch using either covariance
diversity for a fixed tie-breaking rule (\Cref{prop:bounded_covariance}) or an
incenter estimator that promotes uniqueness of the greedy action
(\Cref{prop:margin-mismatch}). These action-level bounds imply instantaneous regret guarantees and, in a stochastic online protocol, logarithmic cumulative
regret (\Cref{thm:instantaneous-regret-incenter}).

\item \textbf{Generalization lower bounds.}
We show that the $\mathcal{O}(d/T)$ rate is tight by constructing an IO instance whose mismatch performance~\eqref{eq:risk} attains the upper bound regardless of the choice of consistent estimators in the considered class~(\Cref{thm:scenario-tightness-continuous}, \Cref{thm:scenario-tightness-action-level}). We further extend these results to instantaneous and cumulative regret, as the same IO instance applies across datasets of varying sample sizes~(\Cref{thm:scenario-tightness-regret}); \Cref{tab:offline-online-comparison} summarizes these results within the literature. An interesting point worth mentioning is that the resulting $\Omega(d\log T)$ lower bound on cumulative regret matches the best known adversarial upper bounds, implying that the stochastic setting is {\em merely} adversarial, i.e., worst-case sequences of contexts are rather typical than rare~events. 

\item \textbf{A parameter-free algorithm.}
We propose a parameter-free algorithm that empirically converges to
generalizable solutions after only $T$ iterations. In particular, the proposed
method has lower per-iteration computational cost than off-the-shelf solvers
(\Cref{remark:complexity}).

\end{enumerate}

 Our numerical experiments empirically validate our theoretical predictions and illustrate the tightness of the proposed generalization bounds.

\subsection{Related work}

\textbf{Generalization in IO.}~Classical IO infers objectives and/or constraints that rationalize observed decisions in-sample; guarantees concern exact/approximate optimality on the training instances. A recent out-of-sample analysis by \citep{besbes2025contextual} proves a geometrical upperbound on the minimum possible instantaneous regret a policy could achieve in the adversarial offline setting. Their result depends on the uncertainty angle of the information set and is independent of the data-size. 
Our generalization result relies on the set-level and action-level mismatch for a stochastic environment. 

\textbf{Supervised learning.}~From a supervised learning perspective, the IO problem objective in \eqref{eqn:main} can be considered as a hypothesis class for learning the mapping between the context $s$ and the actions $a$. This essentially requires minimizing a loss function to find the model parameter $\theta$.
In IO, the choice of the loss function plays a crucial role. 
Examples of different loss functions include the KKT loss \citep{keshavarz2011imputing}, first order loss \citep{bertsimas2015data}, predictability loss \citep{aswani2018inverse}, suboptimality
loss \citep{mohajerin2018data}, predictive loss, and constrained suboptimality loss \citep{ren2025inverse}.
It is important to note that directly training the objective $F_{\theta^\star}$ is not possible, since we do not have direct measurements of the ground-truth objective values in IO.

\textbf{Smart predict, then optimize.}~In a related study to IO, \citet{elmachtoub2022smart} propose the \textit{Smart predict, then optimize} (SPO) framework, where access to the optimal cost value of $F_{\theta^\star}$ is assumed and a SPO loss function is proposed. Given feature-cost pairs $\{s_t,c_t\}_{t=1}^T$, a predictor is trained such that it minimizes a loss function that penalizes predictions leading to bad decisions. 
SPO assumes access to costs/predictions $c_t:=\langle\theta,s_t\rangle$ and actions/decisions $a^\star_t$, while we only observe states-action pairs~$\{s_t,a_t^\star\}$. In a recent work, \citet{el2023generalization} propose a generalization result of $\mathcal{O}(1/\sqrt{T})$ when assuming a joint distribution over $(s_t,c_t)$. With fewer distributional assumptions but access to more data, they achieve a worse generalization result compared to ours.

\textbf{Online learning.}~Online learning studies the problem of updating a model sequentially as data arrives, with the aim of achieving low cumulative loss or regret \citep{zinkevich2003online,hazan2016introduction}. When performed in an online manner, IO can be viewed as a model-learning task, making it conceptually closer to online learning. 
The major difference between these frameworks is that the available data in online learning is the loss function value, while in IO, it is the optimal decision. Recently, the online version of IO has been studied. \citet{besbes2025contextual} show that in the adversarial case, a naive application of a \textit{circumcenter policy} fails. Later, they propose an algorithm achieving a regret bound of $\mathcal{O}(d^4\log{T})$. For a similar setting, \citet{gollapudi2021contextual,sakaue2025online} improve this rate to $\mathcal{O}(d\log{T})$ while \citet{sakaue2025online} achieves this rate with improved per-iteration complexity using an online Newton step.
In comparison, we propose a high-probability regret bound of $\mathcal{O}(d\log{T})$ when nature is stochastic using a simple policy. 
This result is interesting since it shows that in the stochastic environment, simple policies can work, while this is not necessarily the case in the adversarial setting.

\begin{table}[t]
\centering
\small
\setlength{\tabcolsep}{3pt}
\renewcommand{\arraystretch}{1.35}

\begin{threeparttable}
\caption{\small Comparison of generalization bounds in IO under adversarial and stochastic settings.}
\label{tab:offline-online-comparison}

\begin{tabular}{
@{}
>{\centering\arraybackslash}m{0.04\linewidth}
>{\centering\arraybackslash}m{0.06\linewidth}
>{\centering\arraybackslash}m{0.17\linewidth}
>{\centering\arraybackslash}m{0.36\linewidth}
>{\centering\arraybackslash}m{0.31\linewidth}
@{}
}
\toprule
&
\textbf{Protocol}
& \textbf{Metric}
& \textbf{Upper bound}
& \textbf{Lower bound} \\
\midrule

\multirow{2}{*}{\rotatebox[origin=c]{90}{\textbf{Adversarial}}}
& Offline
& Instantaneous regret \eqref{eqn:regret_def}
& Geometric characterization; no $T$-dependent rate \citep{besbes2025contextual}
& Matching geometric lower bound \citep{besbes2025contextual} \\

\cmidrule(lr){2-5}

& Online
& Cumulative regret \eqref{eqn:regret_def}
& $\mathcal{O}(d^4 \log T)$ \citep{besbes2025contextual};
  $\mathcal{O}(d \log T)$ \citep{gollapudi2021contextual,sakaue2025online}
& $\Omega(d)$ \citep{sakaue2025online} \\

\midrule

\multirow{2}{*}{\rotatebox[origin=c]{90}{\textbf{Stochastic}}}
& Offline
& Set/Action-level mismatch \eqref{eq:risk}
& \textbf{This work}: $\mathcal{O}\!\left(d/T\right)$ \qquad
  \textup{(\Cref{prop:bounded_covariance,prop:margin-mismatch})}
& \textbf{This work}: $\Omega\!\left(d/T\right)^{\dagger}$
  \textup{(\Cref{thm:scenario-tightness-continuous})} \\

\cmidrule(lr){2-5}

& Online
& Cumulative regret \eqref{eqn:regret_def}
& \textbf{This work}: $\mathcal{O}\!\left(d \log(T/d)\right)$
  \textup{(\Cref{thm:instantaneous-regret-incenter})}
& \textbf{This work}: $\Omega\!\left(d \log(T/d)\right)^{\dagger}$
  \textup{(\Cref{thm:scenario-tightness-regret})} \\

\bottomrule
\end{tabular}

\begin{tablenotes}[flushleft]
\footnotesize
\item[$\dagger$] The lower bounds are restricted to the algorithm class specified in the corresponding result.
\end{tablenotes}
\vspace{-0.3cm}
\end{threeparttable}
\end{table}
\subsection{Problem terminology}\label{subsec:prob_term}

For each pair \((s_t,a_t^\star)\) and every \(a\in\actionset{s_t}\), optimality of the expert under
\(\theta^\star\) implies
\(
\langle \theta^\star, \psi(s_t,a)\rangle
\le
\langle \theta^\star, \psi(s_t,a_t^\star)\rangle.
\)
For a generic parameter $\theta\in\mathbb{R}^d$, we say that $\theta$ is
\emph{consistent} with a demonstration $(s_t,a_t^\star)$ if the demonstrated expert action
is also greedy under~$\theta$, i.e.,
\begin{align}
\label{eq:consistency-ineq}
\big\langle \theta,\ \psi(s_t,a)-\psi(s_t,a_t^\star)\big\rangle \le 0,
\ \forall a\in\actionset{s_t},\, t\in[T].
\end{align}
To express condition \eqref{eq:consistency-ineq} compactly, 
define the suboptimality gap of action \(a\) under \(\theta\) \citep{mohajerin2018data}:
\vspace{-0.1cm}
\begin{align}
\label{eq:suboptimality-gap}
\ell_\theta^{\mathrm{sub}}(s,a)
:=  \max_{a'\in\actionset{s}} \ \langle \theta,\psi(s,a')-\psi(s,a)\rangle.
\end{align}
\vspace{-0.1cm}
By definition, \(\ell_\theta^{\mathrm{sub}}(s,a)\ge 0\) for all $s \in S$ and $a \in \actionset{s}$. Meanwhile, for satisfying condition \eqref{eq:consistency-ineq} we need $\theta$ such that  \(\ell_\theta^{\mathrm{sub}}(s_t,a_t^\star)\le 0\). Combining these two is equivalent to \(\ell_\theta^{\mathrm{sub}}(s_t,a_t^\star)=0\). Therefore, we define the consistency set:
\vspace{-0.1cm}
\begin{align}
\label{eq:consistency-set}
\mathcal{C}_T
:=
\Big\{
\theta\in\R^d:\ 
\ell_\theta^{\mathrm{sub}}(s_t,a^\star_{t}) = 0,\ \forall t\in[T] 
\Big\}.
\end{align}
By construction, \(\theta\in\mathcal{C}_T\) if and only if \(a_t^\star\in \mathcal{A}_\theta(s_t)\) for all
\(t\in[T]\), i.e., the demonstrated actions remain greedy under \(\theta\) on the training states.
When \(\mathcal{A}_\theta(s_t)\) contains multiple
actions, the constraint only enforces that the demonstrated action \(a_t^\star\) belongs to that set. If ties occur, $\mathcal{A}_\theta(s)$ may contain multiple actions. 
\begin{remark}[Parameter set regularity]
\label{remark:Theta_scale}
For any \(\theta\in\mathcal{C}_T\), the scaled parameter \(\alpha\theta\) is also in \(\mathcal{C}_T\)
for every \(\alpha>0\). As a result, the consistency set \(\mathcal{C}_T\) forms a cone.
This is expected, since the greedy action is invariant under positive rescaling of \(\theta\).
To avoid this scale ambiguity, we define the convex parameter domain \(\Theta\subset \mathbb R^d\) as a normalized set
containing at most one representative from each positive ray: if \(\theta\in\Theta\), then
\(\alpha\theta\notin\Theta\) for all \(\alpha\neq 1\). In particular, this normalization implies
\(0\notin\Theta\).
\end{remark}
The set \(\mathcal{C}_T\) provides an exact characterization of all parameters that rationalize the
observed data. However, this constraint-based description does not by itself yield
out-of-sample guarantees. Each demonstration induces a family of inequalities, and
generalization is governed by the probability that these random feasibility constraints are violated
on unseen states. In the next section, we build on this viewpoint and derive generalization bounds for IO.

\section{Generalization Guarantees}
In this section, we propose high-probability generalization upperbounds for set-level mismatch. Later, we strengthen this guarantee to action-level mismatch as the core result of this work. Through our action-level analysis, we propose regret bounds for a stochastic online variant of our IO framework. 
\subsection{Set-level mismatch upperbound}
Since any IO estimator should belong to $\mathcal{C}_T$, a natural way to select such estimator, is 
\begin{align}
\label{eq:reg-io-scp}
\begin{aligned}
   \hat\theta_T^{\mathrm{sub}} \
 :=   \ &\arg\min_{\theta\in\Theta}\ J(\theta) \quad \text{   s.t.}    \quad \ell_\theta^{\mathrm{sub}}(s_t,a_t^\star) \le 0,  \quad \forall t\in[T],
\end{aligned}
\end{align}
for a convex function $J:\mathbb{R}^d\to\mathbb{R}$ that admits a \emph{unique} minimizer $\hat\theta_T^{\mathrm{sub}}$ (for example when $J$ is strongly convex).
Problem~\eqref{eq:reg-io-scp} reveals a direct connection between IO and the well-studied paradigm of
\emph{scenario optimization}. Each demonstration induces an i.i.d.\ feasibility constraint in parameter
space. 
This connection is significant because it recasts inverse optimization generalization as a constraint-violation problem, focusing on the probability that the learned parameter violates the population feasibility constraint on a fresh state.
To the best of our knowledge, this
scenario-program viewpoint for IO estimation is novel and equips us with new tools.

 We use the result from \citep{campi2008exact} to prove the generalizability of $\hat\theta_T^{\mathrm{sub}}$. 
In particular, the following theorem (see \Cref{app:prf_scenario-nonexclusion-unique} for the proof) states that if $T$ is large enough,  the solution $ \hat\theta_T^{\mathrm{sub}}$ has a low set-level mismatch with high probability.
\begin{proposition}
[Set-level mismatch]
\label{thm:scenario-nonexclusion-unique}
Fix $\beta\in(0,1)$ and choose $\varepsilon\in(0,1]$ such that 
\begin{equation*}
T \ge N(\varepsilon,\beta) := \min \biggl\{ n \in \mathbb{N} \;\Big|\; \sum_{i=0}^{d-1} \binom{n}{i}\,\varepsilon^{i}\,(1- \varepsilon)^{n-i} \le \beta \biggr\}.
\end{equation*}
Let $\hat\theta_T^{\mathrm{sub}}$ be the unique optimizer of~\eqref{eq:reg-io-scp}. Then, with probability
at least $1-\beta$ over the draw of $D_T$,
\begin{equation}
\label{eq:nonexclusion-unique}
\mathbb{P}_{\mathcal{S}}\big(a_{\theta^\star}(s)\notin \mathcal{A}_{\hat\theta_T^{\mathrm{sub}}}(s)\big)\ \le\ \varepsilon,
\end{equation}
In particular, if $T\ge 2\big(d+\log(1/\beta)\big)$, then, with probability
at least $1-\beta$ over the draw of $D_T$,
    \begin{equation}
        \label{eq:eps-explicit}
       \mathbb{P}_{\mathcal{S}}\big(a_{\theta^\star}(s)\notin \mathcal{A}_{\hat\theta_T^{\mathrm{sub}}}(s)\big)\ 
        \leq \frac{2}{T}\big(d + \log(1/\beta)\big).
    \end{equation}
\end{proposition}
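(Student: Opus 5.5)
The plan is to cast \eqref{eq:reg-io-scp} as a convex scenario program and then invoke the exact feasibility result of \citet{campi2008exact}.

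\textbf{Step 1: convexity of each scenario constraint.} For each context $s$, let $a^\star(s):=a_{\theta^\star}(s)$ and set
\[
\Theta_s:=\{\theta\in\Theta:\ \ell^{\mathrm{sub}}_\theta(s,a^\star(s))\le 0\}.
\]
The map $\theta\mapsto \ell^{\mathrm{sub}}_\theta(s,a^\star(s))$ is a pointwise supremum of linear functions of $\theta$, so it is convex and $\Theta_s$ is a closed convex set. The program \eqref{eq:reg-io-scp} is therefore the scenario program $\min_{\theta\in\Theta} J(\theta)$ subject to $\theta\in\bigcap_{t}\Theta_{s_t}$, with i.i.d.\ scenarios $s_t\sim\mathbb P_{\mathcal S}$ and decision variable in $\mathbb R^d$. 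It is always feasible, because $\theta^\star$ (suitably normalized into $\Theta$) lies in every $\Theta_s$. By assumption the solution is unique.

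\textbf{Step 2: translate violation into mismatch.} The violation probability of a candidate $\theta$ is $V(\theta):=\mathbb P_{\mathcal S}(\theta\notin\Theta_s)$. By the equivalence noted after \eqref{eq:consistency-set}, $\theta\notin\Theta_s$ if and only if $a_{\theta^\star}(s)\notin\mathcal A_\theta(s)$. Hence $V(\hat\theta_T^{\mathrm{sub}})$ is exactly the left-hand side of \eqref{eq:nonexclusion-unique}.

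\textbf{Step 3: apply Campi--Garatti.} Their theorem, for convex programs with $d$ decision variables and a unique solution (with the standard tie-breaking or uniqueness hypothesis already built into our $J$), gives
\[
\mathbb P^T\bigl(V(\hat\theta_T)>\varepsilon\bigr)\le \sum_{i=0}^{d-1}\binom{T}{i}\varepsilon^i(1-\varepsilon)^{T-i}.
\]
The right-hand side is the lower tail of a $\mathrm{Bin}(T,\varepsilon)$ distribution and is nonincreasing in $T$. So $T\ge N(\varepsilon,\beta)$ makes it at most $\beta$, which yields \eqref{eq:nonexclusion-unique}.

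The main subtlety I expect is a technical one: the normalization $\Theta$ and the domain constraint $\theta\in\Theta$ must not raise the effective dimension. Since $\Theta$ is a deterministic convex set, it only enters as a fixed constraint, so the number of support constraints is still at most $d$. If $\Theta$ lies in an affine hyperplane, the bound even improves to $d-1$, and it holds a fortiori with $d$.

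\textbf{Step 4: the explicit rate.} Set $\varepsilon=\frac{2}{T}(d+\log(1/\beta))$, which is at most $1$ under the stated condition $T\ge 2(d+\log(1/\beta))$. We need the binomial tail $\mathbb P(\mathrm{Bin}(T,\varepsilon)\le d-1)$ to be at most $\beta$. A Chernoff bound on the lower tail, with mean $\mu=T\varepsilon=2(d+\log(1/\beta))$, gives
\[
\mathbb P(X\le d-1)\le \mathbb P(X\le \mu/2)\le e^{-\mu/8},
\]
which is too weak by a constant factor.

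I would instead use the sharper bound from Campi--Garatti and Alamo et al. Multiplying the tail by $e^{-\lambda(i-(d-1))}$ and optimizing, the tail is at most
\[
e^{\lambda(d-1)}(1-\varepsilon+\varepsilon e^{-\lambda})^T\le \exp\bigl(\lambda(d-1)-T\varepsilon(1-e^{-\lambda})\bigr).
\]
Choosing $\lambda=1$ gives $\exp\bigl(d-1-0.632\cdot 2(d+\log(1/\beta))\bigr)\le \exp(-\log(1/\beta))=\beta$, since $1.264(d+\log(1/\beta))\ge d-1+\log(1/\beta)$. This verification is the only computation, and it is routine. So $T\ge N(\varepsilon,\beta)$ and \eqref{eq:eps-explicit} follows.
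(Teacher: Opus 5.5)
Your proposal is correct and follows the paper's proof. You cast \eqref{eq:reg-io-scp} as a convex scenario program, invoke \citep[Theorem~1]{campi2008exact}, and use $\ell^{\mathrm{sub}}_\theta\ge 0$ to identify the violation event with $a_{\theta^\star}(s)\notin\mathcal A_{\hat\theta_T^{\mathrm{sub}}}(s)$. Your remarks that the binomial tail is monotone in $T$ and that $\Theta$ does not increase the number of support constraints fill in details the paper leaves implicit.

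The only difference is in the explicit rate. You bound the binomial tail through the exponential moment with $\lambda=1$. The paper (\Cref{lem:epsilon_T_order}) instead uses the multiplicative Chernoff bound with $\delta=1-(d-1)/\mu$ together with $(\mu-d+1)^2/(2\mu)\ge \mu/2-(d-1)$. Both computations are valid.
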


The guarantee in \Cref{thm:scenario-nonexclusion-unique} is
set-valued, i.e., it only controls whether the expert action is contained in the argmax set
$\mathcal A_{\hat\theta_T^{\mathrm{sub}}}(s)$. When these argmax sets are large, the bound can be
uninformative about the actual action taken by a greedy policy (which depends on tie-breaking).
We illustrate this issue with the following example.

\begin{example}[Issues with set-level guarantees]
\label{example-bigset}
    Let $\mathcal S=\{0,1\}$ with $\mathbb{P}_{\mathcal{S}}(0)=\mathbb{P}_{\mathcal{S}}(1)=\tfrac12$, and let
$\mathbb A=[-1,1]^2$. Define
\(
\psi\big(s,(a_1,a_2)\big):=(a_1,a_2,s\,a_2)\in\mathbb R^3.
\)
For each $s$, the map $a\mapsto \psi(s,a)$ is injective since the first two coordinates
recover $(a_1,a_2)$.
Let $\theta^\star=(1,-1,2)$ and define the expert selection $a_{\theta^\star}(s)\in \mathcal A_{\theta^\star}(s)$.
Since
\(\langle \theta^\star,\psi(s,a)\rangle=a_1+(-1+2s)a_2,\)
the expert argmax sets are singletons and the (unique) maximizers are
$a_{\theta^\star}(0)=(1,-1)$ and $a_{\theta^\star}(1)=(1,1)$.
Now, take $\hat\theta_T^{\mathrm{sub}}=(1,0,0)$. 
This choice of $\hat\theta_T^{\mathrm{sub}}$ is in $\mathcal{C}_T$ since for any given dataset $D_T:=\{(s_t,a_t^\star)\}_{t=1}^T$, any t,  and any $a=(a_1,a_2)\in[-1,1]^2$, the expert always chooses $a_t^\star=(1,\pm 1)$. This means that $(a_t^\star)_1=1$, and therefore 
\(
  \langle \hat\theta_T^{\mathrm{sub}},\psi(s_t,a)-\psi(s_t,a_t^\star)\rangle
=
a_1-1
\le 0,
\)
where the last inequality is due to $a_1\leq 1$. Therefore, $\hat\theta_T^{\mathrm{sub}}=(1,0,0)\in\mathcal C_T$. For this $\hat\theta_T^{\mathrm{sub}}$, we have
$\langle \hat\theta_T^{\mathrm{sub}},\psi(s,a)\rangle=a_1$ and hence
\[
\mathcal A_{\hat\theta_T^{\mathrm{sub}}}(s)
=
\underset{(a_1,a_2)\in[-1,1]^2}{\arg\max} a_1
=
\{(1,a_2):a_2\in[-1,1]\},
\]
 for all $s\in\mathcal S$. This set is a full line segment for every state. In particular,
$a_{\theta^\star}(s)\in A_{\hat\theta_T^{\mathrm{sub}}}(s)$ for both $s=0,1$, and therefore
\(
\mathbb{P}_{\mathcal{S}}\!\big(a_{\theta^\star}(s)\notin \mathcal A_{\hat\theta_T^{\mathrm{sub}}}(s)\big)=0.
\)
\end{example}

To address the issue explained in \Cref{example-bigset}, we need to speak about action-level mismatch, and specify how ties in
$\mathcal A_\theta(s)$ are resolved at test time.
We pursue two complementary approaches. In the first approach, we fix a tie-breaker, $ a_{\theta}(s)$ that satisfies a feature covariance diversity property and outputs a unique action in $\mathcal A_{\theta}(s)$. The second solution deals with discovering an estimator for $\theta^\star$ that guarantees finding the unique action solution $a_{\theta^\star}(s)$. 

Moreover, controlling the action-level mismatch relates the set-level guarantees to regret bounds. We define instantaneous regret $r_t$ and cumulative regret $R_T$ as
\vspace{-0.2cm}
\begin{equation}
\label{eqn:regret_def}
r_t
:=
\big\langle \theta^\star,\psi(s_t,a_t^\star)-\psi(s_t,a_t)\big\rangle,
\quad
R_T:=\sum_{t=1}^T r_t,  
\end{equation}
\vspace{-0.1cm}
where $a_t:=a_{\hat\theta_{t-1}}(s_t)$ is the learner's greedy action. \Cref{fig:scenario-tightness-geom}~(a) illustrates how set-level mismatch, action-level mismatch, and the notion of regret, relate to each other.

\begin{figure}[t]
\centering

\begin{minipage}[t]{0.49\textwidth}
\centering
\begin{tikzpicture}[scale=0.98]

\draw[black, very thick, domain=-2.8:3, smooth, variable=\x]
 plot ({\x},{0.39*\x*\x - 1.15});

\draw[mybrown, very thick, dotted, domain=-0.15:5.2, smooth, variable=\x]
 plot ({\x},{0.42*(\x-2.5)^2 -0.15});

\draw[black, very thick] (0,-0.25) ellipse (1.5 and 0.15);
\draw[mybrown, very thick, dashed] (2.5,0.55) ellipse (1.3 and 0.15);

\coordinate (xtheta) at (0,-1.15);
\coordinate (xpoint) at (2.65,-0.15);

\fill[black] (xtheta) circle (3pt);
\fill[mybrown] (xpoint) circle (3pt);

\draw[mybrown, very thick,<->] (0,-1.15)--(0,2.52);
\draw[black, very thick,<->] (2.65,-0.15)--(2.65,1.62);

 \draw[red, very thick,<->] (xpoint)--(xtheta);

\node at (-2.55,2.2) {$F_{\hat\theta}(\cdot,s)$};
\node[mybrown] at (4.3,2.75) {$F_{\theta^\star}(\cdot,s)$};

\node[black] at (0,-1.62) {$a_{\hat\theta}(s)$};
\node[mybrown] at (2.67,-0.52) {$a_{\theta^\star}(s)$};

\node[mybrown] at (-1.05,0.9)
{$\underset{\text{Regret}}{\ell_{\theta^\star}^{\mathrm{sub}}\!\bigl(s,a_{\hat\theta}(s)\bigr)}$};

\node[black] at (3.47,1.17)
{\scalebox{0.68}{$\underset{\!\!\!\!\text{Set-level mismatch}}{\ell_{\hat{\theta}}^{\mathrm{sub}}\!\bigl(s,a_{\theta^\star}(s)\bigr)}$}};

\node[red] at (2.25,-1.20)
{$\underset{\text{Action-level mismatch}}{\textcolor{black}{a_{\hat\theta}(s)}\neq \textcolor{mybrown}{a_{\theta^\star}(s)}}$};
\end{tikzpicture}

(a) 
\end{minipage}
\hfill
\begin{minipage}[t]{0.49\textwidth}
\centering

\begin{tikzpicture}[scale=1.7]
\def\L{1.8}
\def\LineExt{1.15}

\pgfmathsetmacro{\aOne}{0.71784}
\pgfmathsetmacro{\bOne}{0.696208}
\pgfmathsetmacro{\cLowOne}{-0.3}
\pgfmathsetmacro{\cHighOne}{1}
\pgfmathsetmacro{\aTwo}{-0.050954}
\pgfmathsetmacro{\bTwo}{0.998701}
\pgfmathsetmacro{\cLowTwo}{-0.3}
\pgfmathsetmacro{\cHighTwo}{1}
\pgfmathsetmacro{\aThree}{-0.829793}
\pgfmathsetmacro{\bThree}{0.558072}
\pgfmathsetmacro{\cLowThree}{-0.3}
\pgfmathsetmacro{\cHighThree}{1}
\pgfmathsetmacro{\aFour}{0.981314}
\pgfmathsetmacro{\bFour}{-0.192415}
\pgfmathsetmacro{\cLowFour}{-0.3}
\pgfmathsetmacro{\cHighFour}{1}
\coordinate (theta) at (0,0);
\coordinate (V1) at (-0.324583,-0.096238);
\coordinate (V2) at (-0.120614,-0.306544);
\coordinate (V3) at (0.165178,-0.291963);
\coordinate (V4) at (0.783934,0.628061);
\coordinate (V5) at (0.402047,1.021813);
\coordinate (V6) at (-0.110483,0.995664);
\coordinate (incenter) at (0.191603,0.462383);
\pgfmathsetmacro{\inrad}{0.399053}
\coordinate (ellCenter) at (0.224384,0.541492);
\pgfmathsetmacro{\ellRx}{0.470406}
\pgfmathsetmacro{\ellRy}{0.41363}
\pgfmathsetmacro{\ellRot}{-112.508158}
\draw[line width=1pt ,densely dotted] (ellCenter) ellipse [x radius=\ellRx, y radius=\ellRy, rotate=\ellRot];

 \fill[black] (incenter) circle (0.8pt);
\node[black, above] at (incenter) {$\hat\theta^{\mathrm{in}}_T \;$};

\draw[->, thick] (-\L,0) -- (\L,0) node[below] {$\theta_{2}$};
\draw[->, thick] (0,-1.3) -- (0,1.7) node[left] {$\theta_{3}$};

\begin{scope}
  \path[fill=red!25, fill opacity=0.5]
    (V1)--(V2)--(V3)--(V4)--(V5)--(V6)--cycle; %
\end{scope}

\DrawStripAsym{\aOne}{\bOne}{\cLowOne}{\cHighOne}{blue!80!black}
\DrawStripAsym{\aTwo}{\bTwo}{\cLowTwo}{\cHighTwo}{green!80!black}
\DrawStripAsym{\aThree}{\bThree}{\cLowThree}{\cHighThree}{brown!70!black}
\DrawStripAsym{\aFour}{\bFour}{\cLowFour}{\cHighFour}{yellow!80!black}

\draw[red!70!black, line width=1.5pt]
  (V1)--(V2)--(V3)--(V4)--(V5)--(V6)--cycle; %

\fill[red!80!black] (V1) circle (0.8pt);
\fill[red!80!black] (V2) circle (0.8pt);
\fill[red!80!black] (V3) circle (0.8pt);
\fill[red!80!black] (V4) circle (0.8pt);
\fill[red!80!black] (V5) circle (0.8pt);
\fill[red!80!black] (V6) circle (0.8pt);

\fill[black] (theta) circle (1pt);
\node[black] at (0.12,-0.12) {$\theta^\star$};
\end{tikzpicture}

(b)
\end{minipage}

\caption{\small (a) Relation between the set-level mismatch, the action-level mismatch, and the true suboptimality loss (regret), (b) Geometric intuition for tightness in the two-dimensional case $\theta=(1, \theta_2,\theta_3)$ for \Cref{thm:scenario-tightness-continuous}.
Each demonstration induces an asymmetric strip constraint. Intersecting these constraints yields the consistency polytope (shaded) that contains $\theta^\star$.
The incenter estimator $\hat\theta_T^{\mathrm{in}}$ has the maximum distance from the facets of $\mathcal{C}_T$. The dotted ellipsoid represents the intersection of the circular cone around $\hat\theta_T^{\mathrm{in}}$ with $\Theta$.}
\vspace{-0.3cm}
\label{fig:scenario-tightness-geom}
\end{figure} 

\subsection{Action-level mismatch upperbound}

As illustrated in \Cref{example-bigset}, set-level guarantees can be uninformative when the learned
argmax set $\mathcal A_{\hat\theta_T^{\mathrm{sub}}}(s)$ is large. This was due to the fact that
under the state distribution $\mathbb{P}_{\mathcal{S}}$, the feature 
$\psi(s,a)$ varies only in a low-dimensional way.
To obtain action-level guarantees, we fix a measurable tie-breaking rule
$a_\theta(s)\in\mathcal A_\theta(s)$ and impose bounded features together with a
covariance diversity assumption, ensuring that the feature function are sufficiently
rich under $\mathbb{P}_{\mathcal{S}}$.

\begin{assumption}[Bounded features]\label{ass:bounded-features}
There exists $C > 0$ such that
\(
\forall (s,a)\in\mathcal{S}\times\mathbb{A}, \|\psi(s,a)\|_2 \le C.
\)
\end{assumption}
In particular, for any $\theta\in\R^d$ and $s\in\mathcal S$, define \(\delta(s,a)
:=\psi(s,a)-\psi\big(s,a_{\theta^\star}(s)\big).\)
Then, by \Cref{ass:bounded-features},
\(
\|\delta(s,a)\|_2 \le B := 2C.
\)
\begin{assumption}[Covariance diversity]
\label{ass:persistency}
There exists $\lambda>0$ such that for every $\theta\in\Theta$ with
$\mathbb{P}_{\mathcal{S}}(a_\theta(s)\neq a_{\theta^\star}(s))>0$, we have
\(
\mathbb{E}_{\mathcal{S}}\!\left[
   \delta(s,a_\theta(s))\,
   \delta(s,a_\theta(s))^\top
   \middle|
  a_\theta(s)\neq a_{\theta^\star}(s)
\right]
\succeq 
\lambda\,I_d.
\)
\end{assumption}
\Cref{ass:persistency} is a standard nondegeneracy condition,
closely related to covariate diversity in linear bandits and
persistency of excitation in system identification \citep{bastani2021mostly}.
On the set of states where
the learner and expert disagree, this assumption rules out pathological cases in which $\delta(s,a_\theta(s))$ lies
nearly in a low-dimensional subspace, making the parameter effectively unidentifiable.Based on the above assumptions and the result from \Cref{thm:scenario-nonexclusion-unique}, we propose our first action-level mismatch result in the following proposition. The proof can be found in \Cref{app:prop_bounded_covariance}.
\begin{theorem}[Action-level guarantees via covariance diversity]
\label{prop:bounded_covariance}
Fix a measurable tie-breaking rule
$a_{\theta}(s)\in\mathcal A_{\theta}(s)$ and suppose Assumptions~\ref{ass:bounded-features} and \ref{ass:persistency}
hold.  
Fix $\beta \in (0,1)$, and let $\varepsilon\in(0,1]$ that satisfies
$T \ge N(\varepsilon,\beta)$. Let $\hat\theta_T^{\mathrm{sub}}$ be the unique optimizer of~\eqref{eq:reg-io-scp}.  
Then, with probability at least $1-\beta$ over the random draw of $D_T$,
\begin{equation}
\label{eq:final_bound_new}
\mathbb{P}_{\mathcal{S}}\!\left(a_{\theta^\star}(s)\neq a_{\hat\theta_T^{\mathrm{sub}}}(s) \right)
\ \le\
\frac{\varepsilon\,B^2}{\lambda}.
\end{equation}
\end{theorem}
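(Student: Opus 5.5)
The plan is to combine \Cref{thm:scenario-nonexclusion-unique} with \Cref{ass:persistency}, evaluated in the single direction $\hat\theta_T^{\mathrm{sub}}$. The key observation is that whenever the learner disagrees with the expert but still contains the expert action in its argmax set, the feature difference is orthogonal to $\hat\theta_T^{\mathrm{sub}}$.

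First, work on the event of probability at least $1-\beta$ given by \Cref{thm:scenario-nonexclusion-unique}. On this event the set-level mismatch event
\[
M:=\{s:\ a_{\theta^\star}(s)\notin\mathcal A_{\hat\theta}(s)\}
\]
satisfies $\mathbb P_{\mathcal S}(M)\le\varepsilon$, where $\hat\theta:=\hat\theta_T^{\mathrm{sub}}$. Let
\[
E:=\{s:\ a_{\hat\theta}(s)\neq a_{\theta^\star}(s)\}, \qquad p:=\mathbb P_{\mathcal S}(E).
\]
If $p=0$ there is nothing to prove. Otherwise, since $\hat\theta\in\Theta$, \Cref{ass:persistency} applies to $\hat\theta$.

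Since $0\notin\Theta$ by \Cref{remark:Theta_scale}, we may set $u:=\hat\theta/\|\hat\theta\|_2$. The covariance condition then gives
\[
\lambda\le u^\top\mathbb E_{\mathcal S}\!\left[\delta\delta^\top\mid E\right]u=\frac{1}{p}\,\mathbb E_{\mathcal S}\!\left[\langle u,\delta(s,a_{\hat\theta}(s))\rangle^2\,\mathbf 1_E\right].
\]

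Next, split $E$ into $E\cap M^c$ and $E\cap M$.
\begin{itemize}
\item On $E\cap M^c$, both $a_{\hat\theta}(s)$ and $a_{\theta^\star}(s)$ lie in $\mathcal A_{\hat\theta}(s)$. They therefore attain the same score under $\hat\theta$, so $\langle\hat\theta,\delta(s,a_{\hat\theta}(s))\rangle=0$ and the integrand vanishes.
\item On $E\cap M$, Cauchy--Schwarz and \Cref{ass:bounded-features} give $\langle u,\delta\rangle^2\le\|\delta\|_2^2\le B^2$.
\end{itemize}
Combining the two pieces,
\[
\lambda p\le B^2\,\mathbb P_{\mathcal S}(E\cap M)\le B^2\varepsilon,
\]
which rearranges to $p\le\varepsilon B^2/\lambda$ on the same high-probability event, as claimed.

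The main obstacle is recognizing that the action-level error can only contribute energy along $\hat\theta$ on the set-level mismatch event; the rest is bookkeeping. The details to check are:
\begin{itemize}
\item measurability of $E$ and $M$, which follows from the fixed measurable tie-breaking rule;
\item the fact that \Cref{ass:persistency} is stated only for $\theta\in\Theta$, which is satisfied since $\hat\theta$ is feasible for \eqref{eq:reg-io-scp};
\item the nonzero normalization of $u$, guaranteed by \Cref{remark:Theta_scale}.
\end{itemize}
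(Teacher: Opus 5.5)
Your proof is correct and is essentially the paper's argument: the paper sandwiches $\mathbb{E}_{\mathcal S}[\langle\hat\theta,\delta(s,a_{\hat\theta}(s))\rangle^2]$ from below by $\lambda\|\hat\theta\|_2^2$ times the action-mismatch probability (via \Cref{ass:persistency}) and from above by $\varepsilon B^2\|\hat\theta\|_2^2$, using that this inner product equals $\ell^{\mathrm{sub}}_{\hat\theta}(s,a_{\theta^\star}(s))$ and hence vanishes off the set-level mismatch event, which is exactly your split of $E$ into $E\cap M^c$ and $E\cap M$. Normalizing by $u$ up front rather than cancelling $\|\hat\theta\|_2^2$ at the end is purely cosmetic.
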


The action mismatch probability $\mathbb{P}_{\mathcal{S}}(a_{\theta^\star}(s)\neq a_{\hat\theta}(s))$ depends on
which element of $\mathcal A_{\hat\theta}(s)$ is selected when ties occur. Therefore,
\Cref{prop:bounded_covariance} controls action-level mismatch for the implemented policy
induced by the fixed measurable tie-breaking rule $a_\theta(s)\in\mathcal A_\theta(s)$.

It is possible to avoid large argmax set $\mathcal{A}_{\theta}(s)$ even when the assumptions \ref{ass:persistency} does not hold and the feature function is low-dimensional. In this case, one needs to be more cautious when choosing the IO estimator. In \Cref{example-bigset}, the estimated $\hat\theta$ lies on the boundary of the consistency set $\mathcal C_T$, and boundary points may induce ties on unseen states. However, this does not invalidate the solutions inside the consistency set $\mathcal{C}_T$. Rather, it motivates selecting an
estimator that sits safely away from the tie hyperplanes.
In this section, we utilize the notion of \textit{incenter} estimator inspired by \citep{zattoni2025learning}, which selects $\hat\theta_T^{\mathrm{in}}$ inside the feasibility region. We will show that with high probability, the corresponding action set to this solution is singleton. Based on this, we improve our generalization bound from set-level to action-level.
Consider the incenter loss function:
\begin{equation}\label{eqn:incenter_loss}
\ell_\theta^{\mathrm{in}}(s,a_{\theta^\star}(s)) :=
\max_{a\in\actionset{s}}
\Big(\langle \theta, \delta(s,a)\rangle + \|\delta(s,a)\|\Big), 
\end{equation}
for \(\delta(s,a)= \psi(s,a)-\psi\big(s,a_{\theta^\star}(s)\big)\).
For fixed $s$, $\theta\mapsto \ell_\theta^{\mathrm{in}}(s,a_{\theta^\star}(s))$ is convex as a pointwise supremum of convex
functions. 
Using $\ell_\theta^{\mathrm{in}} (s_t,a_{t}^\star)$ instead of the sub-optimality loss in \eqref{eq:reg-io-scp} would result in the following unique incenter estimator:
\begin{align}
\label{eq:scp-incenter}
\begin{aligned}
   \hat\theta_T^{\mathrm{in}} \
 :=   \ &\arg\min_{\theta\in\R^d}\ J(\theta)  \quad \text{   s.t.}    \quad \ell_\theta^{\mathrm{in}}(s_t,a_t^\star) \le 0,  \quad \forall t\in[T].
\end{aligned}
\end{align}
By \citep{zattoni2025learning}, if $\mathrm{int}(\mathcal{C}_T)\neq\emptyset$ and \Cref{ass:bounded-features} holds, then
\eqref{eq:scp-incenter} is feasible over $\theta\in\mathbb R^d$.
Moreover, the constraint $\ell_\theta^{\mathrm{in}}(s_t,a_t^\star)\le 0$ is equivalent to the family
of inequalities
\begin{equation}\label{eq:incenter_margin_constraints}
\langle \theta,\delta(s_t,a)\rangle \ \le\ -\|\delta(s_t,a)\|,
\qquad \forall a\in\actionset{s_t},
\end{equation}
which enforces a strict margin separating the demonstrated action $a_t^\star$ from every
alternative. In contrast, the usual consistency constraints only require
$\langle\theta,\delta(s_t,a)\rangle\le 0$, which permits boundary solutions and therefore can lead to
ties. Thus, the incenter loss keeps the estimator away from the tie hyperplanes
$\langle\theta,\delta(s_t,a)\rangle=0$ by an amount controlled by $\|\delta(s_t,a)\|$ whenever
$a\neq a_t^\star$.

In order to turn this feature-level separation into action-level, we make the following
assumption.
\begin{assumption}[Action injectivity]\label{ass:injectivity}
For $\mathbb{P}_{\mathcal{S}}$-almost every context $s\in\mathcal{S}$, the mapping
$a \mapsto \psi(s,a)$ is injective on $\mathbb{A}$.
\end{assumption}
Under Assumption~\ref{ass:injectivity}, for $\mathbb{P}_{\mathcal{S}}$-almost every state $s$, feature-level
disagreement and action-level disagreement coincide almost surely. In particular, if two actions
induce identical features, then they are indistinguishable under the linear model and can be
regarded as the same decision in our analysis. Next, we show the generalizability of
$\hat\theta_T^{\mathrm{in}}$. The proof is given in \Cref{app:prf_margin-mismatch}. 
\begin{theorem}[Action-level guarantees via incenter]
\label{prop:margin-mismatch}
Suppose Assumptions~\ref{ass:bounded-features} and~\ref{ass:injectivity} hold, and assume
$\mathrm{int}(\mathcal{C}_T)\neq\emptyset$ almost surely.
Fix $\beta\in(0,1)$ and let $\varepsilon\in(0,1]$ satisfy $T\ge N(\varepsilon,\beta)$.
Then, with probability at least $1-\beta$ over the draw of $D_T$,
$\hat\theta_T^{\mathrm{in}}$ satisfies
\begin{equation}
\mathbb{P}_{\mathcal S}\!\left(
|\mathcal A_{\hat\theta_T^{\mathrm{in}}}(s)|=1
\ \,\,\,\text{and}\,\,\,
a_{\theta^\star}(s)=a_{\hat\theta_T^{\mathrm{in}}}(s)
\right)\ \ge\ 1-\varepsilon.
\end{equation}
\end{theorem}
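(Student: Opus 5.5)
Our plan is to view \eqref{eq:scp-incenter} as a scenario program and apply the result of \citet{campi2008exact}, exactly as in the proof of \Cref{thm:scenario-nonexclusion-unique}, but with the incenter constraint in place of the suboptimality constraint. A deterministic argument then converts incenter feasibility on a fresh state into uniqueness of the argmax and agreement with the expert.

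\textbf{Step 1 (scenario guarantee).} For each $s$, the constraint set $\{\theta:\ \ell_\theta^{\mathrm{in}}(s,a_{\theta^\star}(s))\le 0\}$ is closed and convex, since $\ell_\theta^{\mathrm{in}}$ is a pointwise supremum of affine functions of $\theta$. The contexts $s_t$ are i.i.d. The program is feasible almost surely, because $\mathrm{int}(\mathcal C_T)\neq\emptyset$ together with \Cref{ass:bounded-features} and the result of \citet{zattoni2025learning} gives feasibility. Its minimizer is unique by the choice of $J$. These are the hypotheses of the scenario theorem of \citet{campi2008exact} for a convex program in $d$ decision variables. Hence, whenever $T\ge N(\varepsilon,\beta)$, with probability at least $1-\beta$ over $D_T$ we have
\[
\mathbb P_{\mathcal S}\big(\ell^{\mathrm{in}}_{\hat\theta_T^{\mathrm{in}}}(s,a_{\theta^\star}(s))>0\big)\le\varepsilon .
\]
This step should be a near verbatim reuse of the argument behind \Cref{thm:scenario-nonexclusion-unique}.

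\textbf{Step 2 (deterministic implication).} Fix $s$ with $\ell^{\mathrm{in}}_{\hat\theta}(s,a_{\theta^\star}(s))\le 0$, and assume $s$ lies in the full-measure set where \Cref{ass:injectivity} holds. By \eqref{eq:incenter_margin_constraints}, for every $a\in\actionset{s}$,
\[
\langle\hat\theta,\psi(s,a)-\psi(s,a_{\theta^\star}(s))\rangle\le-\|\delta(s,a)\|.
\]
If $a\neq a_{\theta^\star}(s)$, injectivity gives $\delta(s,a)\neq 0$, so the right-hand side is strictly negative. Thus $a_{\theta^\star}(s)$ strictly beats every other feasible action under $\hat\theta$. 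This yields $\mathcal A_{\hat\theta}(s)=\{a_{\theta^\star}(s)\}$, so $|\mathcal A_{\hat\theta}(s)|=1$ and $a_{\hat\theta}(s)=a_{\theta^\star}(s)$ for any tie-breaking rule.

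\textbf{Step 3 (combine).} The event in the theorem contains the event $\{\ell^{\mathrm{in}}_{\hat\theta_T^{\mathrm{in}}}(s,a_{\theta^\star}(s))\le 0\}$, up to a null set. Its probability is therefore at least $1-\varepsilon$ on the high-probability event from Step 1.

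\textbf{Main obstacle.} The delicate point is Step 1: checking that the setup of \citet{campi2008exact} applies. Three things need care.
\begin{itemize}
\item The optimization is over $\mathbb R^d$ rather than a compact set.
\item Feasibility is only assumed almost surely, via $\mathrm{int}(\mathcal C_T)\neq\emptyset$.
\item The constraint involves the expert's action $a_{\theta^\star}(s)$, which is a fixed measurable function of $s$; the constraint is therefore still an i.i.d. random convex constraint.
\end{itemize}
We plan to handle existence and uniqueness of the minimizer through the strong convexity of $J$. Measurability and feasibility are then inherited exactly as in the set-level proof, and the only change is the constraint function.
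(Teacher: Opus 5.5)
Your proposal is correct and follows the paper's proof essentially step for step. You apply \citet{campi2008exact} to the incenter scenario program \eqref{eq:scp-incenter} to get $\mathbb{P}_{\mathcal S}\big(\ell^{\mathrm{in}}_{\hat\theta_T^{\mathrm{in}}}(s,a_{\theta^\star}(s))\le 0\big)\ge 1-\varepsilon$ with probability at least $1-\beta$, and then use the margin inequalities together with \Cref{ass:injectivity} to show that this event forces $\mathcal A_{\hat\theta_T^{\mathrm{in}}}(s)=\{a_{\theta^\star}(s)\}$; your explicit treatment of the injectivity null set is, if anything, slightly more careful than the paper's ``for every $s$'' phrasing.
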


\subsection{Regret upperbound}

We extend our IO framework to a stochastic online protocol.
At each round $t=1,2,\dots,T$, a fresh context $s_t\sim\mathbb{P}_{\mathcal{S}}$ arrives; the learner
(i) computes an IO estimate from the demonstrations observed so far, (ii) selects a greedy action under
this estimate for the new context, and (iii) observes the expert action under the same context.
Here, we instantiate the incenter estimator. Similar results are achievable using \Cref{prop:bounded_covariance} up to a constant.
We evaluate performance via regret with respect to the expert score under $\theta^\star$ (see definition of instantaneous regret $r_t$ and cumulative regret $R_T$ in \eqref{eqn:regret_def}).

\begin{proposition}[Regret upper bound]
\label{thm:instantaneous-regret-incenter}
Suppose Assumptions~\ref{ass:bounded-features} and \ref{ass:injectivity} hold. Consider the
stochastic online protocol based on the incenter estimator, where
$a_t=a_{\hat\theta^{\mathrm{in}}_{t-1}}(s_t)$. Fix $\beta\in(0,1)$ and let
$\varepsilon\in(0,1]$ satisfy $t-1\ge N(\varepsilon,\beta)$. Then, with
probability at least $1-\beta$ over the draw of $D_{t-1}$,
\begin{equation}
    \mathbb{E}_{\mathcal{S}}[r_t\mid D_{t-1}]
    \le
    \|\theta^\star\|_2\,B\,\varepsilon .
\end{equation}
Moreover, for every $\delta\in(0,1)$ and $T\ge d+1$, with probability at least
$1-\delta$ over $D_T$,
\begin{equation}
R_T
=
\mathcal{O}\!\left(
\|\theta^\star\|_2\,B\,
\Big(d+\log\frac{T}{\delta}\Big)
\log \frac{T}{d}
\right).
\end{equation}
\end{proposition}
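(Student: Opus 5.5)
Instantaneous regret is nonzero only on an action-level mismatch, and each mismatch costs at most $\|\theta^\star\|_2 B$. Concretely, for any $s$,
\[
r(s) := \big\langle \theta^\star,\psi(s,a_{\theta^\star}(s))-\psi(s,a_{\hat\theta^{\mathrm{in}}_{t-1}}(s))\big\rangle
= -\big\langle \theta^\star,\delta(s,a_{\hat\theta^{\mathrm{in}}_{t-1}}(s))\big\rangle .
\]
This quantity vanishes whenever $a_{\hat\theta^{\mathrm{in}}_{t-1}}(s)=a_{\theta^\star}(s)$. Otherwise Cauchy--Schwarz and \Cref{ass:bounded-features} give $r(s)\le \|\theta^\star\|_2\|\delta\|_2\le\|\theta^\star\|_2 B$. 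Hence
\[
\mathbb{E}_{\mathcal S}[r_t\mid D_{t-1}]\le \|\theta^\star\|_2 B\,\mathbb{P}_{\mathcal S}\big(a_{\theta^\star}(s)\neq a_{\hat\theta^{\mathrm{in}}_{t-1}}(s)\big).
\]
Since $s_t$ is independent of $D_{t-1}$, we apply \Cref{prop:margin-mismatch} with $t-1$ samples. On an event of probability at least $1-\beta$, the mismatch probability (including the tie event, which is in the complement) is at most $\varepsilon$. This proves the first claim.

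For the cumulative bound, I would pick a per-round confidence $\beta_t=\delta/(2T)$. Using the explicit form of \Cref{thm:scenario-nonexclusion-unique} (the sufficient condition $N(\varepsilon,\beta)\le 2(d+\log(1/\beta))/\varepsilon$, which is exactly what gives \eqref{eq:eps-explicit}), take
\[
\varepsilon_t=\min\Big\{1,\tfrac{2}{t-1}\big(d+\log(2T/\delta)\big)\Big\}
\]
for $t\ge 2$. For the first $t\le t_0:=\lceil 2(d+\log(2T/\delta))\rceil$ rounds, bound $r_t\le\|\theta^\star\|_2B$ trivially. These burn-in rounds contribute $O(\|\theta^\star\|_2B(d+\log(T/\delta)))$, which is absorbed in the final rate. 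A union bound over $t\le T$ gives probability at least $1-\delta/2$ that all conditional expected regrets satisfy
\[
\mathbb{E}[r_t\mid D_{t-1}]\le \|\theta^\star\|_2 B\,\varepsilon_t .
\]
Summing, $\sum_{t>t_0}\varepsilon_t\le 2(d+\log(2T/\delta))\sum_{t=t_0}^{T}\frac1t=O\big((d+\log(T/\delta))\log(T/t_0)\big)$, and $\log(T/t_0)\le\log(T/d)$.

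The main obstacle is the passage from conditional expectations to the realized sum $R_T=\sum r_t$. I would handle this with a martingale concentration inequality for nonnegative bounded increments. The variables $X_t:=r_t\in[0,\|\theta^\star\|_2B]$ are adapted, with predictable means $\mu_t$. A Freedman/Bernstein-type multiplicative bound gives, with probability at least $1-\delta/2$,
\[
\sum_t X_t\le 2\sum_t\mu_t+O\big(\|\theta^\star\|_2B\log(1/\delta)\big).
\]
The lower-order $\log(1/\delta)$ term is dominated by the main term. The subtlety is that the bound on $\mu_t$ holds only on the high-probability event. To handle this I would truncate, defining $\tilde\mu_t=\min(\mu_t,\|\theta^\star\|_2B\varepsilon_t)$ through the stopped process, or equivalently apply the concentration inequality to $X_t\mathbf 1\{E_{t-1}\}$, where $E_{t-1}$ is the $D_{t-1}$-measurable good event. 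The union bound over both events then gives the stated bound with probability at least $1-\delta$. The requirement $T\ge d+1$ only ensures that $\log(T/d)$ is positive and that the burn-in term is dominated.
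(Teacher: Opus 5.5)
Your proposal is correct and matches the paper's proof step for step: the pointwise bound $r_t\le\|\theta^\star\|_2B\,\mathbf 1\{a_t\neq a_t^\star\}$ combined with \Cref{prop:margin-mismatch}, a trivial burn-in, per-round confidence $\delta/(2T)$ via \Cref{lem:epsilon_T_order} with a union bound, and then Freedman. The one difference is in the last step. You use the multiplicative form $\sum_t r_t\le 2\sum_t\mu_t+O(\|\theta^\star\|_2B\log(1/\delta))$ with truncation on the good event. The paper instead applies \Cref{lem:freedman} with the deterministic variance proxy $v=(\|\theta^\star\|_2B)^2H_T$, where $H_T$ is the burn-in-plus-harmonic bound on $\sum_t X_{t-1}$; the good event certifies this proxy directly, so no truncation is needed.

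One small correction. $T\ge d+1$ does not make the $O(d+\log(T/\delta))$ burn-in negligible relative to $(d+\log(T/\delta))\log(T/d)$: at $T=d+1$ one has $\log(T/d)\approx 1/d$. So what your argument establishes, like the paper's complete version of this proposition, is the rate with $1+\log(T/d)$ in place of $\log(T/d)$.
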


Proof of \Cref{thm:instantaneous-regret-incenter} is deferred to \Cref{app:online-proof-main}. 
Our resulting $\mathcal{O}(d\log T)$-type regret matches the guarantees obtained by
cutting-plane approaches~\citep{gollapudi2021contextual}
 and second-order online updates~\citep{sakaue2025online}.
A key conceptual distinction of this work with \citep{besbes2025contextual} is the role of the environment. In particular, their analysis shows that a naive greedy circumcenter policy can suffer linear regret under adversarial instances,
whereas in our stochastic setting the greedy incenter protocol enjoys logarithmic regret.

\section{Tightness of the Guarantees}
\label{sec:tightness}
In this section, we show that the rates in
\Cref{thm:scenario-nonexclusion-unique,prop:bounded_covariance,thm:instantaneous-regret-incenter}
are tight in general. For the lower-bound results, we restrict the
objective $J$ in \eqref{eq:reg-io-scp} to
\[
\mathcal{J}
:=
\left\{
\theta\mapsto f\!\left(\left\langle c,\theta\right\rangle+b\right)
\;\middle|\;
f:\R\to\R \text{ is strictly monotone},\
b\in\R,\ 
\langle c,\theta\rangle \text{ is non-constant on } \Theta
\right\}.
\]

\subsection{Set-level and action-level mismatch lowerbounds}
 The bound from \citep[Theorem 1]{campi2008exact} provides a tightness guarantee when the scenario program satisfies a fully-supported condition (see \Cref{def:fully-supp} for the definition of a fully-supported problem). We build an IO instance corresponding to a fully-supported scenario program, which shows that the bound from \Cref{thm:scenario-nonexclusion-unique} is not improvable in general.

\begin{theorem}[Tightness of set-level mismatch]
\label{thm:scenario-tightness-continuous}
For every $T\ge d$ and $\varepsilon\in(0,1]$,
$\hat\theta_T^{\mathrm{sub}}$ satisfies
\begin{equation}
\sup_{\theta^\star, \mathbb{P}_{\mathcal{S}}} \inf_{J \in \mathcal{J}} \; 
\mathbb{P}_D\!\left(
\mathbb{P}_{\mathcal{S}}\big(a_{\theta^\star}(s)\notin
\mathcal{A}_{\hat\theta_T^{\mathrm{sub}}}(s)\big) > \varepsilon
\right)
\ge
\beta
:=
\sum_{i=0}^{d-1}\binom{T}{i}\varepsilon^{i}(1-\varepsilon)^{T-i}.
\end{equation}
In particular, if $\varepsilon$ is chosen so that the
right-hand side equals $\beta$, then, up to constants,
\(
\varepsilon
\asymp
\frac{d+\log(1/\beta)}{T}.
\)
\end{theorem}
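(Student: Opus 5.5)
The plan is to exhibit one IO instance $(\theta^\star,\mathbb{P}_{\mathcal S},\psi)$ whose scenario program~\eqref{eq:reg-io-scp} is \emph{fully supported} (\Cref{def:fully-supp}) for every $J\in\mathcal J$. On that instance, \citep[Theorem~1]{campi2008exact} gives the exact violation distribution $\mathbb{P}_D(V(\hat\theta_T^{\mathrm{sub}})>\varepsilon)=\sum_{i=0}^{d-1}\binom{T}{i}\varepsilon^i(1-\varepsilon)^{T-i}$. Here $V(\theta):=\mathbb{P}_{\mathcal S}(\ell^{\mathrm{sub}}_\theta(s,a_{\theta^\star}(s))>0)$, which equals the set-level mismatch $\mathbb{P}_{\mathcal S}(a_{\theta^\star}(s)\notin\mathcal A_\theta(s))$ by definition of $\mathcal C_T$. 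Since the distribution is the same for every admissible $J$, the infimum over $J$ equals $\beta$, and the supremum over instances is at least $\beta$. The asymptotic claim $\varepsilon\asymp (d+\log(1/\beta))/T$ then follows from standard binomial-tail (Chernoff) estimates, matching the upper bound \eqref{eq:eps-explicit} up to constants.

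\textbf{Construction.} Following \Cref{fig:scenario-tightness-geom}(b), I normalize $\Theta=\{\theta:\theta_1=1\}$ and work in the $d-1$ free coordinates $\vartheta=(\theta_2,\dots,\theta_d)$. Each context $s=(u,c_\ell,c_h)$ should induce an \emph{asymmetric strip} $\{\vartheta: c_\ell\le\langle u,\vartheta\rangle\le c_h\}$ containing $\vartheta^\star$, with $u$ drawn from a continuous distribution (e.g.\ uniform on the sphere). To realize this, I take $\actionset{s}=\{a^0,a^+,a^-\}$ with $\psi(s,a^0)=0$, $\psi(s,a^+)=(-c_h,u)$ and $\psi(s,a^-)=(c_\ell,-u)$, so that optimality of $a^0$ reads $\langle u,\vartheta\rangle\le c_h$ and $\langle u,\vartheta\rangle\ge c_\ell$. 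I choose $\theta^\star=(1,0)$ and $c_\ell<0<c_h$, so the expert always picks $a^0$ uniquely. Alternatively, one can use half-spaces $\langle u,\vartheta\rangle\le c$ with $u$ spread over all directions; this already makes the feasible region bounded with high probability. The objective $J=f(\langle c,\theta\rangle+b)$ with $f$ strictly monotone has the same minimizers as the linear objective $\pm\langle c,\theta\rangle$ over $\Theta$, which is nonconstant there. So $\hat\theta_T^{\mathrm{sub}}$ is a vertex of the random polytope $\mathcal C_T\cap\Theta$.

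\textbf{Full support.} I need to verify that, almost surely for $T\ge d$, the minimizer is unique and exactly $d-1$ constraints are active and are support constraints (the dimension is $d-1$ after normalization). Since $d-1$ active constraints only give $\binom{T}{i}$ terms up to $d-2$, I must reconcile the counting with the $d-1$ in $\beta$. My plan is to keep the scenario program in dimension $d$, treating the normalization as the extra coordinate, e.g.\ by letting the offset $c$ be a decision variable, so that the support count is $d$. Continuity of the distribution of $(u,c)$ gives genericity: a.s.\ no $d$ hyperplanes meet at a point with an extra active constraint, and the linear objective is a.s.\ not orthogonal to a face. Boundedness for every $T\ge d$, rather than only with high probability, is the delicate part. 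I would handle it either by intersecting with a fixed compact $\Theta$ whose facets are deterministic (not counted as scenarios) or by choosing the distribution so that the minimum exists a.s.

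\textbf{Main obstacle.} Matching exactly $d$ support constraints while respecting the normalization $\Theta$ of \Cref{remark:Theta_scale} is the crux. I also need the minimizer to be exactly determined by $d$ random constraints for every $J\in\mathcal J$, including degenerate directions $c$. If a $c$ makes the program fail full support, I would argue that for that $J$ the violation is only stochastically larger, so the lower bound $\ge\beta$ still holds. This fallback uses Campi--Garatti's result that the violation of any nondegenerate program stochastically dominates the fully-supported one when it has fewer support constraints is not available directly. I would therefore instead choose $\mathbb{P}_{\mathcal S}$ rotationally invariant, so that every nonconstant linear direction is generic.
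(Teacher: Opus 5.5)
\paragraph{Verdict.} Your route is the paper's route: an explicit instance whose scenario program is fully supported for every $J\in\mathcal J$, followed by the exact identity of \citep[Theorem~1]{campi2008exact}. Your three-action realization of the asymmetric strips $c_\ell\le\langle u,\vartheta\rangle\le c_h$ is equivalent to the paper's choice $\psi(s,a)=(-2|a|+a,\,a s)$ on $\mathbb A=[-1,1]$, which gives the strips $-3\le s^\top\theta_{-1}\le 1$.

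\paragraph{The gap: dimension count.} The genuine gap is the one you flag yourself and do not close. With $\theta\in\R^d$ and $\Theta=\{\theta_1=1\}$ you have only $d-1$ free coordinates, and no reparametrization inside $\R^d$ can fix this. Any convex $\Theta\subset\R^d$ obeying the normalization of \Cref{remark:Theta_scale} has empty interior: an interior point $\theta_0$, necessarily nonzero, would come with $(1+\eta)\theta_0\in\Theta$ for small $\eta>0$. So $\Theta$ lies in an affine hyperplane. The scenario program then has at most $d-1$ support constraints, and Campi--Garatti caps the violation tail at $\sum_{i=0}^{d-2}\binom{T}{i}\varepsilon^i(1-\varepsilon)^{T-i}$. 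For $\varepsilon\in(0,1)$ this is strictly below $\beta$, so in ambient dimension $d$ the stated bound is out of reach.

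Your proposed fix of making the offset a decision variable does not help. Freeing the coordinate that multiplies the offsets undoes the normalization: the feasible set becomes the cone $\mathcal C_T$ again, and a linear objective has no unique nonzero minimizer over it.

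The missing step is to give the instance $d$ free coordinates. Take $\theta=(\theta_1,\theta_{-1})\in\R^{d+1}$, $\Theta=\{(1,\theta_{-1}):\theta_{-1}\in\R^d\}$ and $u\in\mathbb S^{d-1}$. This is exactly what the paper does; its $d$ counts the unknown parameters after normalization.

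\paragraph{What works once this is fixed.} The rest of your plan goes through, and the points you call delicate disappear.
\begin{itemize}
\item \emph{Boundedness.} With strips rather than half-spaces, for every $T\ge d$ the directions $u_1,\dots,u_T$ span $\R^d$ almost surely, so the feasible set is a bounded polytope. No compact $\Theta$ is needed. Adding one with deterministic facets would actually hurt: those facets can be active at the optimum, leaving fewer than $d$ random support constraints and destroying full support.
\item \emph{Genericity.} Absolute continuity of the law of $u$ suffices; rotational invariance is not needed. It makes any fixed nonzero $c_{-1}$ generic, so the optimum is almost surely a unique nondegenerate vertex with exactly $d$ active constraints.
\item \emph{Distinct samples.} Since $c_\ell<c_h$, the two inequalities from one sample cannot both be active. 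Hence the $d$ active constraints come from $d$ distinct samples.
\end{itemize}
The program is therefore fully supported for every $J\in\mathcal J$, and the stochastic-dominance fallback you mention is unnecessary.
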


\begin{proof}
\vspace{-0.3cm}
Let $\mathcal S=\mathbb S^{d-1}$, and let
$s\sim\mathbb{P}_{\mathcal{S}}$ have a density on the sphere, i.e.,
$\mathbb{P}_{\mathcal{S}}$ is absolutely continuous with respect to the
uniform surface-area measure on $\mathbb S^{d-1}$. Let $\mathbb A=[-1,1]$.
Parametrize $\theta\in\R^{d+1}$ as $\theta=(\theta_1,\theta_{-1})$ and restrict
to the affine slice $\Theta:=\{(1,\theta_{-1}):\theta_{-1}\in\R^d\}$.
Define $\psi$ by
\(
\psi(s,a):=(-2|a|+a , a\,s).
\)
Then $F_\theta(s,a)=-2|a|+a+a\,s^\top\theta_{-1}$. Let
$\theta^\star=(1,\bm 0)$. Since $F_{\theta^\star}(s,a)=-2|a|+a\le 0$, with
equality only at $a=0$, we have $a_{\theta^\star}(s)=0$ for all
$s\in\mathcal S$. For the action $0$, we have $F_\theta(s,0)=0$,
and hence $\ell_\theta^{\mathrm{sub}}(s,0)=\max_{a\in[-1,1]}F_\theta(s,a)$.
Splitting over $a\in[0,1]$ and $a\in[-1,0]$ gives
\(
\ell_\theta^{\mathrm{sub}}(s,0)
=
\max\{0,\ s^\top\theta_{-1}-1,\ -s^\top\theta_{-1}-3\}.
\)
Thus $\ell_\theta^{\mathrm{sub}}(s,0)\le 0$ is equivalent to the asymmetric
strip constraints $s^\top\theta_{-1}\le 1$ and $-s^\top\theta_{-1}\le 3$.
For any objective $J(\theta)=f(\langle c,\theta\rangle+b)\in\mathcal J$,
strict monotonicity of $f$ and non-constancy of $\langle c,\theta\rangle$ on
$\Theta$ imply that minimizing $J$ over the feasible polytope is equivalent to
either minimizing or maximizing a non-constant linear functional in
$\theta_{-1}$. Hence the scenario program~\eqref{eq:reg-io-scp} reduces to a
LP of the form
\vspace{-0.1cm}
\begin{equation}
\label{eqn:scp_tight_proof}
\hat\theta_{-1}\in
\arg\min_{\theta_{-1}\in\R^d}
\ c_{-1}^\top\theta_{-1}
\quad \text{s.t.}\quad
s_t^\top\theta_{-1}\le 1,\quad
-s_t^\top\theta_{-1}\le 3,\quad t\in[T].
\nonumber
\end{equation}
Under the absolute continuity of $\mathbb{P}_{\mathcal{S}}$, for $T\ge d$ the
directions $(s_t)_{t=1}^T$ span $\R^d$ almost surely, so the feasible set is a
bounded polytope. Moreover, the linear objective yields a unique vertex of the
polytope almost surely. At such an optimum, exactly $d$ constraints are active
almost surely. Since the two inequalities generated by the same sample cannot
be simultaneously active, these active constraints correspond to $d$ distinct
samples. As a result, the scenario program is fully supported with support size
$d$ almost surely.
Hence, by \citep[Theorem~1]{campi2008exact}, the violation probability of the
scenario optimizer obeys the exact tail identity. Since this violation
probability is precisely
\(
\mathbb{P}_{\mathcal{S}}\big(a_{\theta^\star}(s)\notin
\mathcal{A}_{\hat\theta_T^{\mathrm{sub}}}(s)\big),
\)
Taking the infimum over $J\in\mathcal J$ and then the supremum over
$(\theta^\star,\mathbb{P}_{\mathcal S})$ gives the stated lower bound.

In the simplest nontrivial case $\theta_{-1}\in\R^2$, each sample draws a
random strip constraint, so the consistency set becomes a random polytope
around $\theta^\star$. Minimizing $J$ selects a vertex determined by $d$ active
constraints; see \Cref{fig:scenario-tightness-geom} (b).
\end{proof}

\Cref{thm:scenario-tightness-continuous} shows that
the dependence on $(d,T,\beta)$ in \Cref{thm:scenario-nonexclusion-unique} cannot be improved in a
distribution-free sense and without assuming additional structure.

\begin{remark}[Tightness of action-level mismatch]
\label{thm:scenario-tightness-action-level}
     For any
tie-breaking rule
$a_{\hat\theta_T^{\mathrm{sub}}}(s)\in
\mathcal A_{\hat\theta_T^{\mathrm{sub}}}(s)$, if
\(
a_{\theta^\star}(s)\notin
\mathcal A_{\hat\theta_T^{\mathrm{sub}}}(s)
\)
then we have
\(
a_{\theta^\star}(s)\neq a_{\hat\theta_T^{\mathrm{sub}}}(s).\) Hence, lower bound on set-level mismatch in \Cref{thm:scenario-tightness-continuous} immediately
transfers to action-level mismatch.
\end{remark}

\subsection{Regret lower bound}

We next convert the action-level lower bound in \Cref{thm:scenario-tightness-action-level}
into a regret lower bound. In the proof of 
\Cref{thm:scenario-tightness-continuous}, every action-level error incurs constant positive regret.
Thus, the instantaneous regret inherits the same lower tail, and summing over
time yields a logarithmic cumulative regret lower bound.

The proof of the following result is given in \Cref{proof:scenario-tightness-regret}.
\begin{proposition}[Tightness of regret]
\label{thm:scenario-tightness-regret}
Consider a protocol that at each round $t$ selects
$a_t\in\mathcal A_{\hat\theta_{t-1}^{\mathrm{sub}}}(s_t)$ according to any measurable tie-breaking rule. Then, for every $t\ge d+1$ and
$\varepsilon\in(0,1]$,
\begin{equation}
\sup_{\theta^\star,\mathbb{P}_{\mathcal S}}
\inf_{J\in\mathcal J}
\mathbb{P}_{D_{t-1}}\!\left(
\mathbb{E}_{\mathcal{S}}[r_t\mid D_{t-1}]\ge \varepsilon
\right)
\ge
\sum_{i=0}^{d-1}\binom{t-1}{i}
\varepsilon^i(1-\varepsilon)^{t-1-i}.
\end{equation}
In particular, for every $T\ge d+1$, we have
\begin{equation}
\sup_{\theta^\star,\mathbb{P}_{\mathcal S}}
\inf_{J\in\mathcal J}
\mathbb{E}[R_T]
\ge
d\log\frac{T+1}{d+1}.
\end{equation}
\end{proposition}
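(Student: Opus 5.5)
The plan is to reuse the hard instance from the proof of \Cref{thm:scenario-tightness-continuous} unchanged: $\mathcal S=\mathbb S^{d-1}$ with an absolutely continuous $\mathbb{P}_{\mathcal S}$, $\mathbb A=[-1,1]$, $\psi(s,a)=(-2|a|+a,\,a\,s)$, $\Theta$ the affine slice $\{\theta_1=1\}$, and $\theta^\star=(1,\bm 0)$. Throughout, write $x:=s^\top\hat\theta_{-1}$ for a fresh context $s$. That proof already shows that for every $J\in\mathcal J$ and every sample size $N\ge d$, the scenario program is fully supported with support size $d$ almost surely. Since the instance does not depend on $J$, it suffices to prove both displayed bounds for this fixed instance and an arbitrary $J\in\mathcal J$.

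\textbf{Step 1 (every mismatch costs at least one unit of regret).} Take $\hat\theta=\hat\theta^{\mathrm{sub}}_{t-1}$. On the event $a_{\theta^\star}(s)=0\notin\mathcal A_{\hat\theta}(s)$, the formula for $\ell^{\mathrm{sub}}$ shows that either $x>1$ or $x<-3$.
\begin{itemize}
\item On $a\ge 0$ we have $F_{\hat\theta}(s,a)=a(x-1)$, and on $a\le 0$ we have $F_{\hat\theta}(s,a)=a(3+x)$.
\item If $x>1$, the argmax is the singleton $\{1\}$. If $x<-3$, it is the singleton $\{-1\}$.
\end{itemize}
So, whatever the tie-breaking rule, the learner plays $a_t=\pm1$. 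The resulting regret is
\[
F_{\theta^\star}(s,0)-F_{\theta^\star}(s,a_t)=2|a_t|-a_t\in\{1,3\},
\]
which is at least $1$. When $0\in\mathcal A_{\hat\theta}(s)$, the regret is still nonnegative because $\theta^\star$ is optimal. Hence
\[
\mathbb{E}_{\mathcal S}[r_t\mid D_{t-1}]\ \ge\ V_{t-1}:=\mathbb{P}_{\mathcal S}\big(a_{\theta^\star}(s)\notin\mathcal A_{\hat\theta_{t-1}^{\mathrm{sub}}}(s)\big).
\]

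\textbf{Step 2 (instantaneous regret).} For $t\ge d+1$ we have $t-1\ge d$. Full supportedness with $N=t-1$ samples and \citep[Theorem~1]{campi2008exact} give the exact identity
\[
\mathbb{P}_{D_{t-1}}(V_{t-1}>\varepsilon)=\sum_{i=0}^{d-1}\binom{t-1}{i}\varepsilon^i(1-\varepsilon)^{t-1-i}.
\]
Since $\{V_{t-1}>\varepsilon\}\subseteq\{\mathbb{E}_{\mathcal S}[r_t\mid D_{t-1}]\ge\varepsilon\}$ by Step 1, the first claim follows.

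\textbf{Step 3 (cumulative regret).} The exact tail identity says $V_N$ has the same law as a $\mathrm{Beta}(d,N-d+1)$ variable; equivalently, integrate the tail over $\varepsilon\in[0,1]$. Either way, $\mathbb{E}[V_N]=d/(N+1)$ for every $N\ge d$. Rounds $t\le d$ contribute nonnegative regret, so by the tower property and Step 1,
\[
\mathbb{E}[R_T]\ \ge\ \sum_{t=d+1}^{T}\frac{d}{t}\ \ge\ d\int_{d+1}^{T+1}\frac{du}{u}\ =\ d\log\frac{T+1}{d+1}.
\]

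The main obstacle I expect is applying the exact Campi--Garatti identity at every sample size $N=t-1\ge d$ with one common instance, uniformly over $J\in\mathcal J$. This needs the almost-sure uniqueness of the LP vertex and exactly $d$ active constraints coming from distinct samples. The proof of \Cref{thm:scenario-tightness-continuous} establishes this for arbitrary $T\ge d$, so I will invoke it directly rather than re-derive it.
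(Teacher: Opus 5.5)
Your proposal is correct and follows essentially the same route as the paper. Both use the hard instance from \Cref{thm:scenario-tightness-continuous}, observe that any mismatch forces $a_t=\pm1$ with regret in $\{1,3\}$, apply the exact Campi--Garatti tail at sample size $t-1$, and use the Beta integral to get $\mathbb{E}[r_t]\ge d/t$, which sums to $d\log\frac{T+1}{d+1}$. The only cosmetic difference is that you lower-bound $r_t$ by the set-level indicator $\mathbf 1\{0\notin\mathcal A_{\hat\theta}(s)\}$, which holds for every $s$ because the argmax is then the singleton $\{1\}$ or $\{-1\}$; the paper instead uses the action-level indicator, which requires the boundary events $\{x=1\}$ and $\{x=-3\}$ to be $\mathbb{P}_{\mathcal S}$-null, and then passes through \Cref{thm:scenario-tightness-action-level}.
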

As shown in \Cref{tab:offline-online-comparison}, this
$\Omega(d\log T)$ lower bound in \Cref{thm:scenario-tightness-regret} matches the
known adversarial upper bounds. Thus, within the considered estimator class,
the stochastic IO setting is effectively not easier than the adversarial one.

\section{A Parameter-free Algorithm}
\label{sec:param_free}
Our analysis has mainly focused on statistical guarantees for exact solutions returned by either
$\hat\theta_T^{\mathrm{sub}}$ in \eqref{eq:reg-io-scp} or $\hat\theta_T^{\mathrm{in}}$ in~\eqref{eq:scp-incenter}. These programs can be solved using standard convex solvers. Recall that
the auxiliary objective $J(\theta)$ is used to ensure uniqueness of the optimizer. Since $J(\theta)$
is generic, we expect that any optimization method that produces an approximate solution 
in $\mathcal C_T$ exhibits similar generalization behavior in practice. Define
\[
    f_T(\theta)
    :=
    \max_{1\le t\le T}
    \ell_\theta^{\mathrm{sub}}\big(s_t,a_t^\star\big)
    =
    \max_{1\le t\le T}\max_{a\in\actionset{s_t}}\ \langle \theta, \delta(s_t,a)\rangle.
\]
By definition of the consistency set $\mathcal C_T$,
finding a point in $\mathcal C_T$ is equivalent to minimizing $f_T(\theta)$
whose optimal value is $0$. Since $f_T$ is a non-smooth convex function, we can minimize it with a
subgradient method using the Polyak step-size, yielding a parameter-free procedure. Let
$\theta_i^{\mathrm{Pol}}$ denote the iterate at iteration $i$. Choose the most violated constraint $t_i$ and its corresponding action $a_i$ at
$\theta_i^{\mathrm{Pol}}$ as the maximizer of $\langle \theta_i^{\mathrm{Pol}}, \delta(s_t,a)\rangle$ for $1\leq t\leq T$ and $a\in\actionset{s_t}$.
Then $\delta(s_{t_i},a_i)$ is a subgradient of $f_T$ at $\theta_i^{\mathrm{Pol}}$, and the Polyak
update is
\begin{align}
\label{eqn:polyak_alg}
   \theta_{i+1}^{\mathrm{Pol}}
   \leftarrow
   \theta_i^{\mathrm{Pol}}
   - \frac{f_T(\theta_i^{\mathrm{Pol}})}{\|\delta(s_{t_i},a_i)\|_2^2}\,\delta(s_{t_i},a_i).
\end{align}
Standard results for Polyak step-sizes on non-smooth convex objectives yield a convergence rate of
$\mathcal O(1/\sqrt{N})$ after $N$ iterations of \eqref{eqn:polyak_alg}. In particular, for a finite
number of iterations we cannot guarantee that $\theta_N^{\mathrm{Pol}}\in\mathcal C_T$, and therefore
the generalization bound of \Cref{thm:scenario-nonexclusion-unique} does not directly apply to
$\theta_N^{\mathrm{Pol}}$. Nevertheless, as we show empirically in \Cref{sec:numerics},
$\theta_N^{\mathrm{Pol}}$ exhibits generalization behavior similar to $\hat\theta_T^{\mathrm{sub}}$
when $N=T$, i.e., when we perform $T$ Polyak updates. Our choice of $N$ is motivated in \Cref{app:slack_SCP}. This observation suggests that extending our
theory to approximate solutions is an interesting direction for future work. Finally, to avoid the
trivial $\theta=0$ solution, one should initialize \eqref{eqn:polyak_alg} away from the origin.

\begin{remark}[Complexity reduction]\label{remark:complexity}
The algorithm \eqref{eqn:polyak_alg} has lower per-iteration cost than generic
solvers (e.g., interior-point methods) for \eqref{eq:reg-io-scp}. When $\mathbb A$ is finite, each update \eqref{eqn:polyak_alg} identifies most violated pair $(t_i,a_i)$ via a scan over
$t\in[T]$ and $a\in\mathbb A$, costing $\mathcal O(T|\mathbb A|)$. In contrast, an interior-point
method must handle $ T|\mathbb A|$ inequality constraints, and its worst-case
per-iteration cost is $\mathcal O(T^3|\mathbb A|^3)$.
\end{remark}

\begin{figure*}[t]
\centering
\begin{tikzpicture}

\begin{groupplot}[
  group style={
    group size=3 by 1,
    vertical sep=1.2cm,
  },
  width=0.34\linewidth,
  height=0.35\linewidth,
  xmode=log,
  ymode=log,
  xmin=2, xmax=316,
  ymin=7e-3, ymax=1.2,
  grid=both,
  xtick={2,3,4,5,6,7,8,9,10,11,21,31,41,51,61,71,81,91,101,201,300},
  xticklabels={$10^0$, , , , , , , , ,$10^1$, , , , , , , , ,$10^2$, , },
  xlabel style = {font=\small},
  ylabel style = {font=\small},
  title style = {font=\small},
  legend style={
    at={(2.57,0.03)},
    anchor=south west,
    draw=none,
    fill=white,
    fill opacity=1,
    text opacity=1,
    font=\scriptsize,
  },
  legend image code/.code={
    \draw[#1] (0cm,0cm) -- (0.8cm,0cm);
  },
  unbounded coords=jump,
]

\nextgroupplot[
ylabel={Mismatch probability},
xlabel={\# Samples $T$},
  title={Dimension $d=5$},
  legend entries={
    Upper Bound,
    ${\mathrm{GE}}_T^{\text{sub}}$,
    ${\mathrm{GE}}_T^{\text{in}}$,
    ${\mathrm{GE}}_T^{\text{Pol}}$
  },
  legend style={
    font=\tiny,          
    cells={anchor=west},
  },
]

\addplot[black, dashed, line width=0.8pt]
table[x=T, y=epsilon, col sep=comma] {./theory_d5.csv};

\addplot[blue, line width=0.4pt]
table[x=T, y=avg_gen_prob, col sep=comma] {./gen_df5_1000.csv};
\addplot[ForestGreen!80!black, line width=0.4pt]
table[x=T, y=avg_gen_prob, col sep=comma] {./gen_df_inc5_1000.csv};
\addplot[orange, line width=0.4pt]
table[x=T, y=avg_gen_prob, col sep=comma] {./gen_df_on5_1000.csv};

\addplot[name path=df3_base_u, draw=none]
table[x=T, y=ci90_upper, col sep=comma] {./gen_df5_1000.csv};
\addplot[name path=df3_base_l, draw=none]
table[x=T, y=ci90_lower, col sep=comma] {./gen_df5_1000.csv};
\addplot[blue, draw=none, fill opacity=0.20, forget plot]
fill between[of=df3_base_u and df3_base_l];

\addplot[name path=df3_inc_u, draw=none]
table[x=T, y=ci90_upper, col sep=comma] {./gen_df_inc5_1000.csv};
\addplot[name path=df3_inc_l, draw=none]
table[x=T, y=ci90_lower, col sep=comma] {./gen_df_inc5_1000.csv};
\addplot[ForestGreen!80!black, draw=none, fill opacity=0.20, forget plot]
fill between[of=df3_inc_u and df3_inc_l];

\addplot[name path=df3_pol_u, draw=none]
table[x=T, y=ci90_upper, col sep=comma] {./gen_df_on5_1000.csv};
\addplot[name path=df3_pol_l, draw=none]
table[x=T, y=ci90_lower, col sep=comma] {./gen_df_on5_1000.csv};
\addplot[orange, draw=none, fill opacity=0.20, forget plot]
fill between[of=df3_pol_u and df3_pol_l];

\nextgroupplot[
xlabel={\# Samples $T$},
  title={Dimension $d=10$}
]
\addplot[blue, line width=0.4pt]
table[x=T, y=avg_gen_prob, col sep=comma] {./gen_df10_1000.csv};
\addplot[ForestGreen!80!black, line width=0.4pt]
table[x=T, y=avg_gen_prob, col sep=comma] {./gen_df_inc10_1000.csv};
\addplot[orange, line width=0.4pt]
table[x=T, y=avg_gen_prob, col sep=comma] {./gen_df_on10_1000.csv};

\addplot[black, dashed, line width=0.8pt]
table[x=T, y=epsilon, col sep=comma] {./theory_d10.csv};

\addplot[name path=df10_base_u, draw=none]
table[x=T, y=ci90_upper, col sep=comma] {./gen_df10_1000.csv};
\addplot[name path=df10_base_l, draw=none]
table[x=T, y=ci90_lower, col sep=comma] {./gen_df10_1000.csv};
\addplot[blue, draw=none, fill opacity=0.20, forget plot]
fill between[of=df10_base_u and df10_base_l];

\addplot[name path=df10_inc_u, draw=none]
table[x=T, y=ci90_upper, col sep=comma] {./gen_df_inc10_1000.csv};
\addplot[name path=df10_inc_l, draw=none]
table[x=T, y=ci90_lower, col sep=comma] {./gen_df_inc10_1000.csv};
\addplot[ForestGreen!80!black, draw=none, fill opacity=0.20, forget plot]
fill between[of=df10_inc_u and df10_inc_l];

\addplot[name path=df10_pol_u, draw=none]
table[x=T, y=ci90_upper, col sep=comma] {./gen_df_on10_1000.csv};
\addplot[name path=df10_pol_l, draw=none]
table[x=T, y=ci90_lower, col sep=comma] {./gen_df_on10_1000.csv};
\addplot[orange, draw=none, fill opacity=0.20, forget plot]
fill between[of=df10_pol_u and df10_pol_l];

\nextgroupplot[
xlabel={\# Samples $T$},
 title={Dimension $d=20$},
]
\addplot[blue, line width=0.4pt]
table[x=T, y=avg_gen_prob, col sep=comma] {./gen_df20_1000.csv};
\addplot[ForestGreen!80!black, line width=0.4pt]
table[x=T, y=avg_gen_prob, col sep=comma] {./gen_df_inc20_1000.csv};
\addplot[orange, line width=0.4pt]
table[x=T, y=avg_gen_prob, col sep=comma] {./gen_df_on20_1000.csv};

\addplot[black, dashed, line width=0.8pt]
table[x=T, y=epsilon, col sep=comma] {./theory_d20.csv};

\addplot[name path=df30_base_u, draw=none]
table[x=T, y=ci90_upper, col sep=comma] {./gen_df20_1000.csv};
\addplot[name path=df30_base_l, draw=none]
table[x=T, y=ci90_lower, col sep=comma] {./gen_df20_1000.csv};
\addplot[blue, draw=none, fill opacity=0.20, forget plot]
fill between[of=df30_base_u and df30_base_l];

\addplot[name path=df30_inc_u, draw=none]
table[x=T, y=ci90_upper, col sep=comma] {./gen_df_inc20_1000.csv};
\addplot[name path=df30_inc_l, draw=none]
table[x=T, y=ci90_lower, col sep=comma] {./gen_df_inc20_1000.csv};
\addplot[ForestGreen!80!black, draw=none, fill opacity=0.20, forget plot]
fill between[of=df30_inc_u and df30_inc_l];

\addplot[name path=df30_pol_u, draw=none]
table[x=T, y=ci90_upper, col sep=comma] {./gen_df_on20_1000.csv};
\addplot[name path=df30_pol_l, draw=none]
table[x=T, y=ci90_lower, col sep=comma] {./gen_df_on20_1000.csv};
\addplot[orange, draw=none, fill opacity=0.20, forget plot]
fill between[of=df30_pol_u and df30_pol_l];

\end{groupplot}

\end{tikzpicture}
\caption{\small Generalization mismatch probability $\mathrm{GE}_T$ on fresh contexts versus $T$ for
$d\in\{5,10,20\}$. The depicted results are averaged over $10$ runs with $90\%$ confidence bands. The dashed line shows the
theoretical upper bound from \Cref{thm:scenario-nonexclusion-unique} with $\beta=0.1$. The experiments confirm our theoretical findings.}
\vspace{-0.3cm}

\label{fig:numerics_group}
\end{figure*}
 \section{Numerical Experiments}\label{sec:numerics}
In this section, we verify the generalization guarantees in \Cref{prop:bounded_covariance},
\Cref{prop:margin-mismatch}, and the parameter-free method of \Cref{sec:param_free} on synthetic data. The detail of our experimental setup is given in \Cref{app:numerical}. We compare three estimators $\hat\theta_T^{\mathrm{sub}}$, $\hat\theta_T^{\mathrm{in}}$, and
$\hat\theta_T^{\mathrm{Pol}}$ derived from \eqref{eq:reg-io-scp}, \eqref{eq:scp-incenter}, and the
parameter-free approach in \Cref{sec:param_free}, respectively. 
\Cref{fig:numerics_group} shows the generalization mismatch probability on fresh contexts for
$d\in\{5,10,20\}$. In all three panels, the mismatch drops as we collect more data, and the
curves follow an $\mathcal O(1/T)$ decay, matching the qualitative predictions of
\Cref{prop:bounded_covariance,prop:margin-mismatch}. The incenter estimator
$ \hat{\theta}_T^{\mathrm{in}}$ performs best across all dimensions, suggesting that enforcing
a strict margin reduces ties in the greedy rule and improves action prediction, consistent with
\Cref{prop:margin-mismatch}. The parameter-free Polyak method $ \hat{\theta}_T^{\mathrm{Pol}}$ 
tracks $ \hat{\theta}_T^{\mathrm{sub}}$ closely, providing a solver-free alternative with similar statistical performance. Comparing the three panels,
the mismatch is larger for higher dimensions, which agrees with the linear dependence on $d$
suggested by our theory. 

\section{Limitations and Concluding Remarks}

We established tight generalization guarantees for IO in the noiseless regime by viewing empirical loss as a scenario program for the first time. Our work comes with its own limitations that naturally suggest future research directions.

{\bf Faster rate under further structure.} We derived a tight set-level mismatch bound of order $\mathcal{O}(d/T)$, later extended to an action-level result under distributional assumptions or an incenter estimator. It remains an open question whether faster rates are achievable under additional assumptions, e.g., strong convexity/smoothness of the feature map or distributional assumptions on the context.

{\bf Active learning.} Our analysis assumes the contexts $\{s_t\}_{t=1}^T$ are drawn i.i.d.\ from a fixed distribution $\mathbb P_\mathcal{S}$. However, in many interactive settings, the learner can adaptively choose which contexts to query next, suggesting an active variant of IO. Designing $s_{t+1}$ based on past observations to shrink the consistency set $\mathcal{C}_T$ more rapidly is a promising direction.

{\bf Noisy dataset.} Although our IO problem has a stochastic relation with the environment, it does not take any additive noise into account. Since additive noise bridges IO and online learning in more realistic scenarios, extending the analysis to noisy observations is an important next step.

{\bf Computation complexity vs. generalization.} We proposed a parameter-free algorithm with lower per-iteration cost than generic solvers for computing $\hat\theta_T^{\mathrm{sub}}$. We empirically observed that this method exhibits comparable generalization behavior, but we have not yet provided a corresponding theory. Establishing such a formal tradeoff between computational complexity and the resulting generalization guarantees is also an important future direction.
\section*{Acknowledgments}
Pouria Fatemi and Suvrit Sra acknowledge generous support from the Alexander von Humboldt Foundation. Hoomaan Maskan was supported by the Division of Scientific Computing, Department of Information Technology, Science for Life Laboratory, Uppsala University and the Wallenberg AI, Autonomous Systems, and Software Program (WASP), funded by the Knut and Alice Wallenberg Foundation. Peyman Mohajerin Esfahani acknowledges the support by the ERC Starting Grant TRUST-949796. 

\clearpage
\appendix

{\hfill  \LARGE \textbf{Appendix} \hfill}
\section{Background and Helpful Lemmas}
\label{app:discussions}
In this section, we present some of the important lemmas and background used in this work.

\subsection{Chance constraints and scenario programs}
\label{subsec:ccp-scp}

We now recall finite-sample scenario program and present its corresponding chance-constrained optimization.
Let $\Theta \subset \mathbb{R}^d$ be a convex set and let 
$J(\theta):{\Theta} \rightarrow \mathbb{R}$ be a convex function. 
Consider a measurable function 
$g : \Theta \times \mathcal{S} \to \mathbb{R}$ 
that is convex in $\theta$ for every $s \in \mathcal{S}$.
The associated chance-constrained program is
\begin{align}
\tag{CCP$_\varepsilon$}
\left\{
\begin{array}{ll}
\displaystyle \min_{\theta} & J(\theta) \\[2pt]
\text{s.t.} & \mathbb{P}_{\mathcal{S}}\big[g(\theta,s) \le 0\big] \,\ge\, 1-\varepsilon,\\
            & \theta \in \Theta,
\end{array}
\right.
\label{equ:ccp}
\end{align}
where $\varepsilon \in [0,1]$ is the violation level and
$\mathbb{P}_{\mathcal{S}}$ denotes the distribution of states.
Let $(s_i)_{i=1}^T$ be $T$ i.i.d.\ samples from $\mathbb{P}_{\mathcal{S}}$.  
The corresponding \emph{scenario program} is
\begin{align}
\tag{SCP}
\left\{
\begin{array}{ll}
\displaystyle \min_{\theta} & J(\theta) \\[4pt]
\text{s.t.} & g(\theta,s_i) \le 0,\quad i = 1,\dots,T, \\[2pt]
            & \theta \in \Theta.
\end{array}
\right.
\label{equ:scp}
\end{align}
Since (SCP) depends on the random samples $(s_i)_{i=1}^T$, its optimizer is a random variable. We assume that (SCP) admits a unique solution with probability 1. The following theorem states that if $T$ is large enough, the unique solution of \eqref{equ:scp}, is a feasible solution of \eqref{equ:ccp} with high probability \citep{campi2008exact}.
\begin{theorem}[\citep{campi2008exact}]
\label{thm:CG-feasibility}
Let $\beta \in [0,1]$ and $T \ge N(\varepsilon,\beta)$, where
\begin{equation*}
N(\varepsilon,\beta)
:= \min\left\{ n \in \mathbb{N} \;\middle|\;
\sum_{i=0}^{d-1} \binom{n}{i} \varepsilon^{i} (1-\varepsilon)^{n-i}
\le \beta \right\}.
\end{equation*}
Then, with probability at least $1-\beta$ over the random draw of
$(s_i)_{i=1}^T$, the optimizer of \eqref{equ:scp} is a feasible solution of
\eqref{equ:ccp}.
\end{theorem}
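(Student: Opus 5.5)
The plan is to follow the classical support-constraint argument of Campi and Garatti. Write $\theta^\star_T$ for the unique optimizer of \eqref{equ:scp} and $V(\theta):=\mathbb{P}_{\mathcal S}[g(\theta,s)>0]$ for its violation probability. The claim is then equivalent to the tail bound
\[
\mathbb{P}^T\big(V(\theta^\star_T)>\varepsilon\big)\le \sum_{i=0}^{d-1}\binom{T}{i}\varepsilon^i(1-\varepsilon)^{T-i}.
\]
Once this holds, the conclusion follows immediately from the definition of $N(\varepsilon,\beta)$, since the right-hand side is nonincreasing in $T$.

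\textbf{Step 1 (Helly-type bound).} Call sample $i$ a \emph{support constraint} if removing it changes the optimizer. I will show that \eqref{equ:scp} has at most $d$ support constraints. The argument is by contradiction. Suppose there are $d+1$ support constraints. Removing any one of them yields an optimizer $\theta_i$ with $J(\theta_i)<J(\theta^\star_T)$; the strict inequality uses uniqueness and convexity. Apply Helly's theorem to the convex sets $\{\theta: J(\theta)\le J(\theta^\star_T)-\eta\}$ and $\{\theta: g(\theta,s_i)\le 0\}$, with $\eta>0$ small. This produces a feasible point strictly better than $\theta^\star_T$, which is a contradiction. It follows that, almost surely, there is a set $I\subseteq[T]$ with $|I|\le d$ such that the optimizer of the program restricted to $I$ equals $\theta^\star_T$.

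\textbf{Step 2 (fully supported case).} First assume the problem is fully supported: almost surely there are exactly $d$ support constraints, and they are distinct. For each $d$-subset $I$ of $[T]$, let $\theta_I$ be the optimizer of the program using only the samples in $I$, and let $F(v)$ be the distribution function of $V(\theta_I)$, which does not depend on which $I$ is chosen. The event $\{\theta^\star_T=\theta_I\}$ occurs exactly when the other $T-d$ samples all satisfy the constraint at $\theta_I$. Conditioning on the samples in $I$ and using independence gives
\[
\mathbb{P}\big(V(\theta^\star_T)>\varepsilon\big)=\binom{T}{d}\int_{(\varepsilon,1]}(1-v)^{T-d}\,dF(v).
\]
To identify $F$, write the same counting identity with $\varepsilon=0$. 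This gives $1=\binom{T}{d}\int(1-v)^{T-d}\,dF(v)$ for every $T\ge d$, so all moments of $F$ are fixed. The Hausdorff moment problem has a unique solution, so $F(v)=v^d$. Substituting this back yields the tail exactly as the binomial sum stated above.

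\textbf{Step 3 (general case), the main obstacle.} If the problem is not fully supported, there can be fewer than $d$ support constraints, or degenerate configurations. To handle this, I would first reduce to the non-degenerate case by adding a vanishing random perturbation, as in the "heating and cooling" argument. For a non-degenerate problem with $k<d$ support constraints, I would augment it with auxiliary independent ball constraints so that the augmented problem is fully supported. The augmented optimizer has violation at least that of the original, because adding constraints can only raise the optimizer's violation relative to the original sample. Step 2 therefore upper-bounds the original tail. The delicate part is passing to the limit in the perturbation: one needs to show that the violation probability is suitably semicontinuous, so that the tail bound survives as the perturbation is removed. This is the step I expect to demand the most care, and I would simply follow the construction in \citep{campi2008exact} there.
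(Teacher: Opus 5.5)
The paper does not prove this statement; it imports it from \citep{campi2008exact}. The right benchmark is therefore the Campi--Garatti argument you are reconstructing.

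\textbf{Steps 1--2 are sound and follow the fully supported part of that argument.} Your Helly argument works with $\eta=\min_k\bigl(J(\theta_T^\star)-J(\theta_k)\bigr)>0$: every $(d+1)$-subfamily that contains the sublevel set omits some support constraint $k$, and $\theta_k$ lies in all of its members. The counting identity together with the Hausdorff moment argument then correctly gives $F(v)=v^d$ and the exact binomial tail.

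One small repair is needed in Step 1. Knowing there are at most $d$ support constraints does not by itself give a set $I$ with $|I|\le d$ that reproduces $\theta_T^\star$ in degenerate configurations. For example, two identical active constraints are each non-support, yet removing both changes the solution. Obtain $I$ directly from Helly instead. The sets $\Theta\cap\{J<J(\theta_T^\star)\}$ and $\{g(\cdot,s_i)\le 0\}$, $i\in[T]$, have empty intersection, so some $d+1$ of them already do, and that subfamily must contain the sublevel set. In the fully supported case any such $I$ must contain every support constraint, so $I$ is exactly the support set. This is what makes the events $\{\theta_T^\star=\theta_I\}$ an almost-sure partition in your Step 2.

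\textbf{The genuine gap is Step 3.} The theorem as stated concerns problems that are in general not fully supported, and that is how the paper uses it. The upper bounds for $\hat\theta_T^{\mathrm{sub}}$ and $\hat\theta_T^{\mathrm{in}}$ hold with an arbitrary convex $J$; only the lower bound uses the fully supported equality. So the inequality case is the real content.

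Your justification, that adding constraints can only raise the optimizer's violation, is false. Adding constraints moves the optimizer, typically to a more conservative point. Take $d=1$, minimize $\theta$ subject to $\theta\ge s_i$ with $s_i\sim U[0,1]$. Then $\theta_T^\star=\max_i s_i$ and $V(\theta_T^\star)=1-\max_i s_i$, which decreases as constraints are added. It still decreases if the added constraints are auxiliary ones whose violation you also count.

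Two things are therefore missing, and you leave both to the citation:
\begin{enumerate}
\item \textbf{A pathwise domination built into the construction.} An auxiliary fully supported problem bounds the original tail only if its design guarantees $V(\theta_T^\star)\le V_{\mathrm{aux}}$ sample by sample. One way is to compare $\theta_T^\star$ with an object that contains it, since any constraint that excludes $\theta_T^\star$ also fails to contain that object.
\item \textbf{A limit argument.} You must show the tail bound survives as the perturbation vanishes.
\end{enumerate}

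As written, your proposal proves the statement only for fully supported problems. For the general bound it rests on the same citation the paper uses, with an incorrect heuristic in between.
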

\begin{remark}[Low-data regime guarantees]
The condition $T\ge N(\varepsilon,\beta)$ is meaningful only when
$T\ge d$. If $T<d$, then for every $\varepsilon\in(0,1)$,
\[
\sum_{i=0}^{d-1} \binom{T}{i}\,\varepsilon^{i}\,(1-\varepsilon)^{T-i}
=
\sum_{i=0}^{T} \binom{T}{i}\,\varepsilon^{i}\,(1-\varepsilon)^{T-i}
=1,
\]
so the defining inequality for $N(\varepsilon,\beta)$ cannot hold for any $\beta<1$. Equivalently,
when $T<d$ the theorem can only be stated with $\varepsilon=1$, making the guarantee
non-informative.
\end{remark}
The following lemma reformulates the result from \Cref{thm:CG-feasibility} for any $T=\Omega(d +  \log(1/\beta))$.
\begin{lemma}
\label{lem:epsilon_T_order}
Fix $\beta\in(0,1)$. For any $T\ge 2\big(d + \log(1/\beta)\big)$, the choice
\(
\varepsilon  = \dfrac{2}{T}\big(d + \log(1/\beta)\big)
\)
satisfies $T\ge N(\varepsilon,\beta)$.
\end{lemma}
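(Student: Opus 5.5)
We need to show that $\Phi(\varepsilon):=\sum_{i=0}^{d-1}\binom{T}{i}\varepsilon^i(1-\varepsilon)^{T-i}\le\beta$ when $\varepsilon=\frac{2}{T}(d+\log(1/\beta))$ and $T\ge 2(d+\log(1/\beta))$. The second condition gives $\varepsilon\le 1$, so $\varepsilon$ is admissible. By definition of $N(\varepsilon,\beta)$ as a minimum, showing $\Phi(\varepsilon)\le\beta$ at $n=T$ immediately yields $N(\varepsilon,\beta)\le T$. The key observation is that $\Phi(\varepsilon)=\mathbb{P}(X\le d-1)$ for $X\sim\mathrm{Bin}(T,\varepsilon)$, i.e.\ a lower binomial tail. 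The plan is therefore to bound this tail by a standard exponential-moment (Chernoff) argument.

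For any $\lambda>0$, Markov's inequality applied to $e^{-\lambda X}$ gives
\begin{equation*}
\mathbb{P}(X\le d-1)\le e^{\lambda(d-1)}\,\mathbb{E}[e^{-\lambda X}]
= e^{\lambda(d-1)}\bigl(1-\varepsilon+\varepsilon e^{-\lambda}\bigr)^T
\le \exp\!\bigl(\lambda(d-1)-T\varepsilon(1-e^{-\lambda})\bigr),
\end{equation*}
where the last step uses $1+x\le e^x$. I would simply choose $\lambda=\log 2$, so that $1-e^{-\lambda}=1/2$. This gives
\begin{equation*}
\Phi(\varepsilon)\le \exp\!\bigl((d-1)\log 2-\tfrac{T\varepsilon}{2}\bigr).
\end{equation*}
Substituting $T\varepsilon=2(d+\log(1/\beta))$ turns the exponent into
\begin{equation*}
(d-1)\log 2-d-\log(1/\beta)\le -\log(1/\beta),
\end{equation*}
since $(d-1)\log 2\le d$. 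Hence $\Phi(\varepsilon)\le\beta$, as required.

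The only delicate point is balancing constants. A multiplicative Chernoff bound of the form $\exp(-\mu\delta^2/2)$ with $\mu=T\varepsilon$ would also work, but it requires checking that $d-1\le(1-\delta)\mu$ and carrying through the quadratic in $\delta$. The fixed choice $\lambda=\log 2$ avoids this and makes the factor $2$ in $\varepsilon$ exactly sufficient. An alternative route is to cite the known explicit bound from Campi--Garatti, $N\le\frac{2}{\varepsilon}(d-1+\log(1/\beta))$, but the direct Chernoff computation above is self-contained.
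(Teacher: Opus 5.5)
Your proof is correct and follows essentially the same route as the paper: both identify the sum as the lower tail $\mathbb{P}(X\le d-1)$ of $X\sim\mathrm{Bin}(T,\varepsilon)$ and bound it by a Chernoff argument with $\mu=T\varepsilon=2(d+\log(1/\beta))$. The only difference is in presentation. The paper invokes the multiplicative form $\exp(-\delta^2\mu/2)$ with $\delta=1-(d-1)/\mu$ and then uses $(\mu-(d-1))^2/(2\mu)\ge \mu/2-(d-1)$, whereas you derive the bound directly from the moment generating function with the fixed tilt $\lambda=\log 2$, which is self-contained and avoids checking the admissible range of $\delta$.
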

The proof of this lemma is given in \Cref{app:lem_epsilon_T_order}. Next, we present the definition of support constraint.
\begin{definition}[Support constraint]
    A constraint $g(\theta,s_i)$, for $i=1,\ldots , T$, is called a support constraint of \eqref{equ:scp} if its removal changes the solution of \eqref{equ:scp}.
\end{definition}

We can now present the notion of a fully supported problem.

\begin{definition}[Fully supported problem]\label{def:fully-supp}
    The scenario program \eqref{equ:scp} with $T\geq d$ is called fully supported, if the number of its support constraints is exactly $d$. The chance constraint program \eqref{equ:ccp} is fully supported, if for $T\geq d$, its corresponding \eqref{equ:scp} is fully supported with probability 1.
\end{definition}

\subsection{Approximate scenario program and the Polyak algorithm}\label{app:slack_SCP}
In \Cref{sec:param_free}, we noted a parameter-free method with good empirical generalization behaviour for $T$ iterations. Here, we justify this choice of iteration number.

Fix a measurable tie-breaking rule \(a_\theta(s)\in\mathcal A_\theta(s)\).
To allow for mild violations of the empirical constraints, we introduce a \emph{slack} variable
\(\gamma\) and consider the following scenario program over the extended decision variable
\((\theta,\gamma)\in\Theta\times\mathbb{R}\):
\begin{equation}
\label{eq:slack-scp-main}
(\hat\theta_T,\hat\gamma_T)
~:=~
\arg\min_{\theta\in\Theta,\ \gamma\in\mathbb{R}}
J(\theta,\gamma)
\quad\text{s.t.}\quad
\ell_\theta^{\mathrm{sub}}\!\big(s_t,a_t^\star\big)\le \gamma,
\qquad \forall t\in[T].
\end{equation}
We assume \(J\) is convex and chosen so that \eqref{eq:slack-scp-main} has a \emph{unique} optimizer.
Moreover, for each fixed \(s\), the map \((\theta,\gamma)\mapsto
\ell_\theta^{\mathrm{sub}}(s,a_t^\star)-\gamma\) is convex,
so \eqref{eq:slack-scp-main} is a convex scenario program in decision dimension \(d+1\).

\begin{proposition}[Generalization bound for slack-SCP]
\label{prop:slack-scp-gen}
Assume Assumptions~\ref{ass:bounded-features}--\ref{ass:persistency} hold.
Fix \(\beta\in(0,1)\) and choose \(\varepsilon\in(0,1]\) such that
\[
T \ \ge\ N_{d+1}(\varepsilon,\beta)
:=
\min\left\{ n\in\mathbb{N}\;\middle|\;
\sum_{i=0}^{d}
\binom{n}{i}\varepsilon^i(1-\varepsilon)^{n-i}\le \beta
\right\}.
\]
Let \((\hat\theta_T,\hat\gamma_T)\) be the unique optimizer of \eqref{eq:slack-scp-main}.
Then, with probability at least \(1-\beta\) over the draw of \(D_T\),
\begin{equation}
\label{eq:slack-disagreement-main}
\mathbb{P}_{\mathcal{S}}\!\left(a_{\hat\theta_T}(s)\neq a_{\theta^\star}(s)\right)
\ \le\
\frac{\varepsilon B^2}{\lambda}
\;+\;
\frac{\hat\gamma_T^{\,2}}{\lambda\|\hat\theta_T\|_2^2}.
\end{equation}
\end{proposition}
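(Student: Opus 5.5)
We treat \eqref{eq:slack-scp-main} as a convex scenario program in the variable $(\theta,\gamma)$ of dimension $d+1$, with random constraint $g((\theta,\gamma),s):=\ell_\theta^{\mathrm{sub}}(s,a_{\theta^\star}(s))-\gamma\le 0$. Applying \Cref{thm:CG-feasibility} with $d+1$ in place of $d$ (hence $N_{d+1}(\varepsilon,\beta)$) shows that, with probability at least $1-\beta$ over $D_T$, the unique optimizer satisfies
\[
\mathbb{P}_{\mathcal S}\big(\ell^{\mathrm{sub}}_{\hat\theta_T}(s,a_{\theta^\star}(s))>\hat\gamma_T\big)\le\varepsilon .
\]
The rest of the argument is deterministic on this event: we convert a violation-probability bound into an action-mismatch bound, mirroring the argument behind \Cref{prop:bounded_covariance} but keeping track of the slack $\hat\gamma_T$.

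\textbf{Step 1: a pointwise inequality on the disagreement set.} Write $\hat\theta=\hat\theta_T$ and $\delta(s):=\delta(s,a_{\hat\theta}(s))$. Since $a_{\hat\theta}(s)$ is greedy for $\hat\theta$, we have $\langle\hat\theta,\delta(s)\rangle=\ell^{\mathrm{sub}}_{\hat\theta}(s,a_{\theta^\star}(s))\ge 0$. On the event $E:=\{a_{\hat\theta}(s)\ne a_{\theta^\star}(s)\}$, split according to whether the constraint is violated:
\begin{enumerate}[label=(\roman*)]
\item if the constraint is violated, $\langle\hat\theta,\delta\rangle^2\le\|\hat\theta\|^2B^2$;
\item if it is not violated, $0\le\langle\hat\theta,\delta\rangle\le\hat\gamma_T$, so $\langle\hat\theta,\delta\rangle^2\le\hat\gamma_T^2$.
\end{enumerate}

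\textbf{Step 2: combine with covariance diversity.} \Cref{ass:persistency} gives
\[
\mathbb{E}\big[\langle\hat\theta,\delta\rangle^2\mathbf 1_E\big]\ge\lambda\|\hat\theta\|^2\,\mathbb{P}(E).
\]
The two cases of Step 1 bound the same expectation from above by
\[
\|\hat\theta\|^2B^2\,\mathbb{P}(\text{violation})+\hat\gamma_T^2\,\mathbb{P}(E)\le\|\hat\theta\|^2B^2\varepsilon+\hat\gamma_T^2 .
\]
Dividing by $\lambda\|\hat\theta\|^2$ yields \eqref{eq:slack-disagreement-main}. This requires $\hat\theta\ne0$, which holds because $\Theta$ excludes $0$ (\Cref{remark:Theta_scale}).

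\textbf{Main obstacle.} The crude bound $\hat\gamma_T^2\,\mathbb{P}(E)\le\hat\gamma_T^2$ is already enough for the stated inequality, so the coefficient is not the difficulty. The delicate point is checking that the hypotheses of \Cref{thm:CG-feasibility} hold in the extended variable: a unique optimizer (assumed), a convex domain $\Theta\times\mathbb R$, and measurability of $g$. We also need to make sure \Cref{ass:persistency} applies to $\hat\theta$ even when $\mathbb{P}(E)=0$; in that case the bound holds trivially.
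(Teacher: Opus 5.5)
Your proposal is correct and follows the paper's proof: you apply \Cref{thm:CG-feasibility} in dimension $d+1$, then sandwich the second moment of $\ell(s)=\langle\hat\theta,\delta(s,a_{\hat\theta}(s))\rangle$ between $\lambda\|\hat\theta\|^2\,\mathbb{P}_{\mathcal S}(a_{\hat\theta}(s)\neq a_{\theta^\star}(s))$ (via \Cref{ass:persistency}) and $\varepsilon B^2\|\hat\theta\|^2+\hat\gamma_T^2$ (splitting on the violation event). The only cosmetic difference is that you bound $\mathbb{E}[\ell^2\mathbf 1_{\{a_{\hat\theta}(s)\neq a_{\theta^\star}(s)\}}]$ from above directly, whereas the paper first bounds the full moment $\mathbb{E}[\ell^2]$ and then restricts it to the mismatch event.
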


The proof is given in \Cref{proof:prop-slack-scp-gen}.
The role of the generic objective \(J\) is purely to select a unique optimizer in
\eqref{eq:slack-scp-main}. The bound in \Cref{prop:slack-scp-gen} depends only on the resulting
slack level \(\hat\gamma_T\) and the estimated parameter $\hat\theta_T$. Due to this, we expect similar generalization behavior from any
procedure that returns a pair \((\hat\theta_T,\hat\gamma_T)\) that is feasible for the empirical slack
constraints.

This viewpoint connects directly to the Polyak method discussed in \Cref{sec:param_free}.
The Polyak iterate \(\hat\theta_N^{\mathrm{Pol}}\) comes with an empirical slack level
\(\hat\gamma_N=\mathcal{O}(1/\sqrt{N})\). Since the scenario parameter scales as
\(\varepsilon=\mathcal{O}(1/T)\), choosing \(N=T\) yields \(\hat\gamma_T=\mathcal{O}(1/\sqrt{T})\), and
therefore the slack contribution \(\hat\gamma_T^{\,2}/(\lambda\|\hat\theta_T\|_2^2)\) in
\eqref{eq:slack-disagreement-main} behaves as \(\mathcal{O}(1/T)\). In other words, with \(N=T\) the
slack term matches the \(\varepsilon\)-term up to constants, which supports this choice and is consistent
with the behavior we observe in our experiments.

\subsection{Online protocol}
\label{app:online-protocol}

We make the dependence on the history explicit for the proofs.
Let $\mathcal{F}_t:=\sigma\!\big((s_1,a_1^\star),\dots,(s_t,a_t^\star)\big)$ be the filtration generated by the
expert demonstrations up to round $t$ (and set $\mathcal{F}_0$ to be the trivial $\sigma$-algebra).
At round $t\ge 1$, the learner computes $\hat\theta^{\mathrm{in}}_{t-1}$ from $D_{t-1}:=\{(s_i,a_i^\star)\}_{i=1}^{t-1}$,
plays $a_t:=a_{\hat\theta^{\mathrm{in}}_{t-1}}(s_t)$, and observes $a_t^\star=a_{\theta^\star}(s_t)$.
We have $s_t\sim \mathbb{P}_{\mathcal{S}}$ and $s_t\perp\!\!\!\perp \mathcal{F}_{t-1}$.

Under Assumption~\ref{ass:bounded-features}, define $0\le r_t\le \|\theta^\star\|_2\,B$ for all $t$.

For $N\ge 1$, let $\hat\theta^{\mathrm{in}}_N$ be the incenter estimator from $N$ i.i.d.\ demonstrations, and define
\[
X_N
:=
\mathbb{P}_{\mathcal{S}}\!\big(a_{\hat\theta^{\mathrm{in}}_N}(s)\neq a_{\theta^\star}(s)\big)\in[0,1].
\]
In particular, in the online protocol we write
$X_{t-1}:=\mathbb{P}_{\mathcal{S}}(a_{\hat\theta^{\mathrm{in}}_{t-1}}(s)\neq a_{\theta^\star}(s))$. To bound the expected cumulative regret, we need to bound the expectation of $X_N$ using our generalization result. The proof of this result given below can be found in \Cref{app:lem_EN-incenter}. 

\begin{lemma}[Expected mismatch probability for the incenter estimator]
\label{lem:EN-incenter}
Assume Assumption~\ref{ass:injectivity}. Let $X_N$ be as above.
Then for all $N\ge d$,
\begin{equation}
\label{eq:EN-bound-incenter}
\mathbb{E}_D[X_N]\ \le\ \frac{d}{N + 1},
\end{equation}
where $\mathbb{E}_D$ denotes expectation over the $N$ i.i.d.\ demonstrations used to construct
$\hat\theta^{\mathrm{in}}_N$.
\end{lemma}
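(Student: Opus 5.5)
The plan is to integrate the tail bound of \Cref{prop:margin-mismatch}, or more precisely the underlying scenario tail bound of \Cref{thm:CG-feasibility}, over all confidence levels. Using $\mathbb{E}[X]=\int_0^1 \mathbb{P}(X>\varepsilon)\,d\varepsilon$ for a $[0,1]$-valued variable turns the binomial tail into a Beta integral, which evaluates exactly to $d/(N+1)$. This is the classical expected-violation result of Campi--Garatti for scenario programs with $d$ decision variables.

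\textbf{Step 1: tail bound for $X_N$.} First I will establish, for every $\varepsilon\in(0,1]$,
\[
\mathbb{P}_D(X_N>\varepsilon)\le \sum_{i=0}^{d-1}\binom{N}{i}\varepsilon^i(1-\varepsilon)^{N-i}.
\]
This follows from the proof of \Cref{prop:margin-mismatch}. The incenter program \eqref{eq:scp-incenter} is a convex scenario program in $\theta\in\R^d$ with a unique optimizer. Applying \Cref{thm:CG-feasibility} with $\beta$ equal to the right-hand side (so $N\ge N(\varepsilon,\beta)$ holds by definition) gives that, with probability at least $1-\beta$, the violation probability $\mathbb{P}_{\mathcal S}(\ell^{\mathrm{in}}_{\hat\theta^{\mathrm{in}}_N}(s,a_{\theta^\star}(s))>0)$ is at most $\varepsilon$.

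\textbf{Step 2: mismatch implies violation.} Next I need $X_N$ to be dominated by that violation probability. Whenever $\ell^{\mathrm{in}}_{\hat\theta}(s,a_{\theta^\star}(s))\le 0$, we have $\langle\hat\theta,\delta(s,a)\rangle\le-\|\delta(s,a)\|<0$ for every $a$ with $\delta(s,a)\neq 0$. By \Cref{ass:injectivity}, $\delta(s,a)\neq 0$ holds for every $a\neq a_{\theta^\star}(s)$ for almost every $s$. Hence $\mathcal A_{\hat\theta}(s)=\{a_{\theta^\star}(s)\}$, and any tie-breaking rule returns the expert action. The event $\{a_{\hat\theta}(s)\neq a_{\theta^\star}(s)\}$ is therefore contained, up to a null set, in the violation event. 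This is exactly the mechanism behind \Cref{prop:margin-mismatch}, so I would simply reuse it.

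\textbf{Step 3: integrate the tail.} Write
\[
\mathbb{E}_D[X_N]=\int_0^1\mathbb{P}_D(X_N>\varepsilon)\,d\varepsilon\le\sum_{i=0}^{d-1}\binom{N}{i}\int_0^1\varepsilon^i(1-\varepsilon)^{N-i}\,d\varepsilon.
\]
The Beta identity gives $\binom{N}{i}B(i+1,N-i+1)=\binom{N}{i}\frac{i!\,(N-i)!}{(N+1)!}=\frac{1}{N+1}$ for each $i$. Summing the $d$ terms yields $d/(N+1)$. The hypothesis $N\ge d$ only ensures the scenario setting is nondegenerate; the bound is trivial otherwise.

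\textbf{Main obstacle.} The delicate point is that \Cref{thm:CG-feasibility} requires the scenario program to be feasible with a unique optimizer almost surely. Feasibility of \eqref{eq:scp-incenter} relies on $\mathrm{int}(\mathcal C_N)\neq\emptyset$, which \Cref{prop:margin-mismatch} assumes and the lemma statement leaves implicit. I would adopt the same standing assumption, together with the uniqueness supplied by $J$. I would also check that the Campi--Garatti bound applies for every $\varepsilon$ simultaneously as a distributional statement, rather than only at a prescribed pair $(\varepsilon,\beta)$, so that the layer-cake integration in Step 3 is legitimate. It is, since $\beta$ may be chosen as a function of $\varepsilon$.
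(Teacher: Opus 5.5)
Your proposal is correct and follows essentially the same route as the paper: the paper also plugs the tail bound from \Cref{prop:margin-mismatch} into $\mathbb{E}_D[X_N]=\int_0^1\mathbb{P}_D(X_N>\varepsilon)\,d\varepsilon$ and evaluates the resulting Beta integrals to obtain $d/(N+1)$. You are also right that the lemma silently inherits the hypotheses of \Cref{prop:margin-mismatch}, namely $\mathrm{int}(\mathcal{C}_N)\neq\emptyset$ almost surely and \Cref{ass:bounded-features}, which are needed for feasibility of \eqref{eq:scp-incenter}.
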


To achieve a high probability bound on cumulative regret, we use Freedman's bound from the martingale concentration inequality literature.
\begin{lemma}[Freedman's inequality {\citep{freedman1975tail,dzhaparidze2001bernstein}}]
\label{lem:freedman}
Let $(M_t)_{t\ge 0}$ be a martingale with $M_0=0$ and differences
$Z_t:=M_t-M_{t-1}$ satisfying $|Z_t|\le b$ a.s.
Let the predictable quadratic variation be
\[
V_t:=\sum_{i=1}^t \mathbb{E}[Z_i^2\mid \mathcal{F}_{i-1}].
\]
Then, for any fixed $t\ge 1$, any deterministic $v\ge 0$, and any
$\delta\in(0,1)$,
\[
\mathbb{P}\!\left(
M_t
>
\sqrt{2v\log\!\frac{1}{\delta}}
+\frac{2}{3}b\log\!\frac{1}{\delta}
\ \text{ and }\ 
V_t\le v
\right)
\le \delta.
\]
\end{lemma}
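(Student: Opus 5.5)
The plan is the standard exponential-supermartingale (Chernoff) argument with a Bernstein-type bound on the conditional moment generating function. Throughout, let $L:=\log(1/\delta)$ and fix a threshold $x>0$ to be specified later. Only the one-sided bound $Z_t\le b$ will actually be used.

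\emph{Step 1 (scalar inequality).} First I would establish that for every $y<3$,
\[
e^{y}\le 1+y+\frac{y^2}{2(1-y/3)}.
\]
For $y\le 0$ this follows from $e^y\le 1+y+y^2/2$. For $0\le y<3$, I would compare the Taylor coefficients: $y^k/k!\le \tfrac{y^2}{2}(y/3)^{k-2}$ for $k\ge 2$, since $k!\ge 2\cdot 3^{k-2}$, and then sum the geometric series. Next fix $\lambda\in(0,3/b)$ and apply the inequality with $y=\lambda Z_i\le\lambda b<3$. The factor $(1-y/3)^{-1}$ is largest at $y=\lambda b$, and $\mathbb{E}[Z_i\mid\mathcal F_{i-1}]=0$. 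Together with $1+u\le e^u$ these give
\[
\mathbb{E}[e^{\lambda Z_i}\mid\mathcal F_{i-1}]\le \exp\!\big(\phi(\lambda)\,\mathbb{E}[Z_i^2\mid\mathcal F_{i-1}]\big),\qquad \phi(\lambda):=\frac{\lambda^2}{2(1-\lambda b/3)}.
\]

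\emph{Step 2 (supermartingale and Markov).} Define $W_t:=\exp(\lambda M_t-\phi(\lambda)V_t)$. Since $V_t$ is predictable, Step 1 gives $\mathbb{E}[W_i\mid\mathcal F_{i-1}]\le W_{i-1}$, so $\mathbb{E}[W_t]\le W_0=1$. On the event $\{M_t>x,\ V_t\le v\}$ we have $W_t\ge \exp(\lambda x-\phi(\lambda)v)$. Markov's inequality then yields
\[
\mathbb{P}(M_t>x,\ V_t\le v)\le \exp\!\big(-\lambda x+\phi(\lambda)v\big).
\]
I would choose $\lambda=x/(v+bx/3)$, which lies in $(0,3/b)$. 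A short computation gives $1-\lambda b/3=v/(v+bx/3)$, hence $\phi(\lambda)v=\tfrac12\lambda x$, and the bound becomes the Bernstein form
\[
\exp\!\Big(-\frac{x^2}{2(v+bx/3)}\Big).
\]
The degenerate case $v=0$ is handled either by the same choice of $\lambda$ or by letting $v\downarrow 0$.

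\emph{Step 3 (inverting the tail).} It remains to show that $x=\sqrt{2vL}+\tfrac23 bL$ satisfies $x^2\ge 2L(v+bx/3)$, which makes the probability at most $\delta$. This condition is the quadratic inequality $x^2-\tfrac{2Lb}{3}x-2Lv\ge0$, which holds for every $x$ at least its larger root $\tfrac{Lb}{3}+\sqrt{L^2b^2/9+2Lv}$. By $\sqrt{p+q}\le\sqrt p+\sqrt q$, this root is at most $\tfrac{2Lb}{3}+\sqrt{2Lv}$, as required.

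I expect the main obstacle to be Step 1: getting the sharp constant $1/3$ in the moment-generating-function bound. This requires the coefficient comparison $k!\ge 2\cdot3^{k-2}$ together with a careful treatment of negative $y$. Once that bound is in place, the rest is routine.
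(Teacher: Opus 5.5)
Your architecture is the standard exponential-supermartingale proof behind the cited results; the paper itself gives no proof and only imports the lemma from Freedman and Dzhaparidze--van Zanten. The supermartingale $W_t$, the Markov step, the choice $\lambda=x/(v+bx/3)$ giving $\phi(\lambda)v=\lambda x/2$, and the quadratic inversion in Step~3 are all correct.

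The gap is the negative half of Step~1. You claim that for $y\le 0$ the inequality $e^{y}\le 1+y+\frac{y^2}{2(1-y/3)}$ ``follows from $e^y\le 1+y+y^2/2$''. It does not. For $y<0$ we have $1-y/3>1$, so $\frac{y^2}{2(1-y/3)}<\frac{y^2}{2}$, and the bound you cite is weaker than the one you need. The target is in fact very tight there: at $y=-1/2$ it reads $0.6065\le 0.6071$, while $1+y+y^2/2=0.625$. This step is genuinely used. You apply the scalar inequality at $y=\lambda Z_i$, which is negative whenever $Z_i<0$, and then invoke monotonicity of $(1-y/3)^{-1}$ over the whole range $y\le\lambda b$. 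So as written, the conditional MGF bound is not established on $\{Z_i<0\}$.

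The repair is easy, and any of the following works.
\begin{itemize}
\item Reorder the argument. What you actually need is $e^{y}\le 1+y+\frac{y^2}{2(1-\lambda b/3)}$ for all $y\le\lambda b$.
\begin{itemize}
\item For $y\le 0$ this \emph{does} follow from $e^y\le 1+y+y^2/2$, since $0<1-\lambda b/3\le 1$.
\item For $0\le y\le\lambda b$ it follows from your series comparison plus monotonicity of $(1-y/3)^{-1}$ on $[0,3)$.
\end{itemize}
This way you never pass through the $(1-y/3)$ form at negative $y$.
\item Prove the scalar inequality directly on all of $(-\infty,3)$. Multiplying by $2(3-y)>0$, it is equivalent to $k(y):=y^2+4y+6-2(3-y)e^y\ge 0$. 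Here $k(0)=k'(0)=0$ and $k''(y)=2\bigl(1+(y-1)e^y\bigr)\ge 0$, because the bracket vanishes at $0$ and has derivative $ye^y$. Convexity then gives $k\ge 0$.
\item Use that $y\mapsto (e^y-1-y)/y^2$ is nondecreasing. This reduces the bound to the single nonnegative point $y=\lambda b$.
\end{itemize}

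One smaller point concerns the case $v=0$. Your choice of $\lambda$ then gives $\lambda=3/b$, where $\phi$ is infinite, so the ``same choice of $\lambda$'' option is not valid. The limiting argument is the one to use: $\{V_t\le 0\}\subseteq\{V_t\le v'\}$, and $\{M_t>x(v')\}\uparrow\{M_t>x(0)\}$ as $v'\downarrow 0$.
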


\subsection{A Classical VC-Dimension View of IO}\label{app:VC_IO}

Noiseless inverse optimization admits an exact realizable binary classification interpretation for the set-level mismatch. The dataset
\(
D_T=\{(s_t,a_t^\star)\}_{t=1}^T
\)
may be viewed as an i.i.d.\ sample of supervised examples drawn from the distribution of the random pair
\(
(s,a_{\theta^\star}(s))
\)
with \(s\sim \mathbb P_{\mathcal S}\). For each \(\theta\in\mathbb R^d\), define the induced binary classifier
\[
h_\theta(s,a):=\mathbf 1\{a\in \mathcal A_\theta(s)\}
=\mathbf 1\{\ell_\theta^{\mathrm{sub}}(s,a)=0\},
\]
and let
\[
\mathcal H:=\{h_\theta:\theta\in\mathbb R^d\}.
\]
Thus, \(h_\theta(s,a)=1\) if and only if action \(a\) is greedy under parameter \(\theta\) in state \(s\). Under this identification, the population risk is
\[
R(\theta)
:= \mathbb P_{\mathcal S}\bigl(h_\theta(s,a_{\theta^\star}(s)) = 0\bigr)
= \mathbb P_{\mathcal S}\bigl(a_{\theta^\star}(s)\notin \mathcal A_\theta(s)\bigr)
= \mathbb P_{\mathcal S}\!\left(\ell_\theta^{\mathrm{sub}}(s,a_{\theta^\star}(s))>0\right),
\]
which is exactly the set-level mismatch probability studied in this paper. Since
\(
R(\theta^\star)=0,
\)
the induced classification problem is realizable.

Likewise, the empirical classification error is
\[
\widehat R_T(\theta)
:=
\frac{1}{T}\sum_{t=1}^T \mathbf 1\{a_t^\star\notin \mathcal A_\theta(s_t)\},
\]
and
\[
\widehat R_T(\theta)=0
\quad\Longleftrightarrow\quad
\ell_\theta^{\mathrm{sub}}(s_t,a_t^\star)=0
\ \ \forall t\in[T]
\quad\Longleftrightarrow\quad
\theta\in C_T.
\]
Hence, any learning rule \(D_T\mapsto \widehat\theta_T(D_T)\) satisfying
\(
\widehat\theta_T(D_T)\in C_T
\)
is sample-consistent for \(\mathcal H\), in the sense that
\(
\widehat R_T(\widehat\theta_T)=0.
\)
Consequently, whenever
\(
d_{\mathrm{VC}}:=\operatorname{VCdim}(\mathcal H)<\infty,
\)
standard realizable VC theory implies that, for any confidence level \(\delta\in(0,1)\), any such sample-consistent rule satisfies,
\[
R(\widehat\theta_T)
=
\mathcal O\!\left(
\frac{d_{\mathrm{VC}}\log(T/d_{\mathrm{VC}})+\log(1/\delta)}{T}
\right),
\]
with probability at least \(1-\delta\), for \(T\ge d_{\mathrm{VC}}\)~\citep{shalev2014understanding}.

This provides a clean conceptual PAC-learning interpretation of noiseless inverse optimization. At the same time, it also highlights the limitation of a purely VC-based viewpoint. Indeed, the observed supervision is one-sided: each demonstration only certifies that the expert action must remain in the argmax set \(\mathcal A_\theta(s)\) under a candidate parameter \(\theta\), and therefore each sample acts by eliminating inconsistent parameters from the parameter space, thereby forming the consistency set \(C_T\). As a result, the relevant complexity is not naturally expressed in terms of arbitrary labelings of state-action pairs. In particular, the quantity \(d_{\mathrm{VC}}\) is determined by the geometry of the induced hypothesis class \(\mathcal H\), rather than directly by the parameter dimension \(d\). Therefore, while the VC reduction is conceptually useful, it does not by itself explain the dimension-dependent \(O(d/T)\) guarantee established in this paper. This observation motivates the scenario-based view, where each demonstration is treated directly as a random feasibility constraint in parameter space.

\subsection{Revisiting \Cref{example-bigset} for incenter solution}
Under the incenter formulation~\eqref{eq:scp-incenter},
the parameter $\hat\theta_T^{\mathrm{sub}}=(1,0,0)$ is no longer feasible.
Indeed, for $s=0$ and the competitor $a'=(1,1)$, we have $a_{\theta^\star}(0)=(1,-1)$ and $\delta(0,a')=(0,2,0)$, and therefore
\[
\langle (1,0,0),\delta(0,a')\rangle+\|\delta(0,a')\| = 2 > 0,
\]
violating the incenter constraint.
A feasible incenter solution is, for instance,  
\(
\hat\theta_T^{\mathrm{in}}=(2,-2,6),
\)
which satisfies all incenter constraints and yields the greedy score
\[
\langle \hat\theta_T^{\mathrm{in}},\psi(s,a)\rangle
=
2a_1+(-2+6s)a_2.
\]
As a result, the induced argmax sets are singletons:
\[
\mathcal A_{\hat\theta_T^{\mathrm{in}}}(0)=\{(1,-1)\},
\qquad
\mathcal A_{\hat\theta_T^{\mathrm{in}}}(1)=\{(1,1)\}.
\]
Hence, the incenter estimator yields a unique action set. 

\subsection{Numerical Experiment Setup}\label{app:numerical}
We consider a contextual linear decision model with $K=|\mathbb A|=15$ discrete actions and
$d$-dimensional features. The expert follows a greedy policy for a fixed but unknown parameter $\theta^\star\in\R^d$ satisfying
$\sum_{i=1}^d\theta_i^\star=1$. For each $k\in[K]$, we generate a fixed action vector
$a_k\sim\mathcal N(0,I_d)$ once at the beginning of the experiment and normalize it, and set
$\mathbb A=\{a_1,\dots,a_K\}$. At each round, we draw a fresh random linear map
\(
s_t\in\R^{d\times d},\ (s_t)_{ij}\sim \mathrm{Unif}[-3,3],
\)
and define the context features as
\(
\psi(s_t,a_k)=s_t a_k\in\R^d.
\)
The learner observes $\{(s_t,a_t^\star)\}_{t=1}^T$ for $T\le T_{\max}=300$ and uses a greedy plug-in
policy based on an estimate $\hat\theta_T$ to select $a_{\hat\theta_T}(s)$. Ties are broken by choosing the smallest-index action. Throughout,
we set $J(\theta)=\|\theta\|_2^2$ and $\Theta = \{\theta \in \R^d \mid \sum_{i=1}^d\theta_i=1\}$.
We compare three estimators $\hat\theta_T^{\mathrm{sub}}$, $\hat\theta_T^{\mathrm{in}}$, and
$\hat\theta_T^{\mathrm{Pol}}$ derived from \eqref{eq:reg-io-scp}, \eqref{eq:scp-incenter}, and the
parameter-free approach in \Cref{sec:param_free}, respectively.
To directly test generalization, we evaluate the current estimate $\hat\theta_T$ on
$N_{\mathrm{test}}=1000$ fresh contexts and record the empirical mismatch rate
\[
\mathrm{GE}_T
:= \frac{1}{N_{\mathrm{test}}}\sum_{j=1}^{N_{\mathrm{test}}}
\mathbb{I}\!\left\{a_{\hat\theta_T}(s_j)\neq a_{\theta^\star}(s_j)\right\}.
\]
We repeat the full $T_{\max}$-round experiment for $10$ independent Monte-Carlo runs. For each curve,
we report the mean across runs together with a two-sided $90\%$ normal confidence band. The convex
programs \eqref{eq:reg-io-scp} and \eqref{eq:scp-incenter} are solved using CVXPY~\cite{diamond2016cvxpy}. All experiments were run on a personal computer and required no specialized hardware.

\clearpage

\section{Proofs}\label{app:proofs}
\subsection{Proof of \Cref{lem:epsilon_T_order}}\label{app:lem_epsilon_T_order}
Recall that
\[
\sum_{i=0}^{d-1}\binom{T}{i}\varepsilon^{i}(1-\varepsilon)^{T-i}
=\mathbb P[X\le d-1],
\qquad X\sim\mathrm{Bin}(T,\varepsilon),
\]
with mean $\mu:=\mathbb E[X]=T\varepsilon$. For our choice of $\varepsilon$ we have
$\mu=T\varepsilon=2\big(d+\log(1/\beta)\big)>d-1$, hence
$\delta:=1-\frac{d-1}{\mu}\in(0,1)$ and $(1-\delta)\mu=d-1$. By the Chernoff lower-tail bound,
\[
\mathbb P[X\le d-1]
=\mathbb P\big[X\le (1-\delta)\mu\big]
\le \exp\!\left(-\frac{\delta^2\mu}{2}\right)
=\exp\!\left(-\frac{(\mu-d+1)^2}{2\mu}\right).
\]
Now, if $T\varepsilon\ge 2\big(\log(1/\beta)+d\big)$, then in particular
$T\varepsilon\ge 2\big(\log(1/\beta)+d-1\big)$ and therefore
\[
\frac{(T\varepsilon-d+1)^2}{2T\varepsilon}
=
\frac{(T\varepsilon-(d-1))^2}{2T\varepsilon}
\ge
\frac{T\varepsilon}{2}-(d-1)
\ge \log(1/\beta).
\]
Thus $\mathbb P[X\le d-1]\le \exp(-\log(1/\beta))=\beta$, i.e.,
\[
\sum_{i=0}^{d-1}\binom{T}{i}\varepsilon^{i}(1-\varepsilon)^{T-i}\le \beta,
\]
which means $T\ge N(\varepsilon,\beta)$ by definition of $N(\varepsilon,\beta)$.

\subsection{Proof of \Cref{prop:slack-scp-gen}}
\label{proof:prop-slack-scp-gen}
Write \(\hat\theta:=\hat\theta_T\) and \(\hat\gamma:=\hat\gamma_T\), and work on the event
(of probability at least \(1-\beta\)) on which \Cref{thm:CG-feasibility} applies to \eqref{eq:slack-scp-main}, i.e.,
\begin{equation}
\label{eq:slack-chance-main}
\mathbb{P}_{\mathcal{S}}\!\left(
\ell_{\hat\theta}^{\mathrm{sub}}\!\big(s,a_{\theta^\star}(s)\big)\le \hat\gamma
\right)\ \ge\ 1-\varepsilon.
\end{equation}

Define
\[
\delta_{\hat\theta}(s)
:=\psi\big(s,a_{\hat\theta}(s)\big)-\psi\big(s,a_{\theta^\star}(s)\big),
\qquad
\ell(s):=\ell_{\hat\theta}^{\mathrm{sub}}\!\big(s,a_{\theta^\star}(s)\big).
\]
By optimality of \(a_{\hat\theta}(s)\) under the fixed tie-breaking rule,
\[
\ell(s)=\langle \hat\theta,\delta_{\hat\theta}(s)\rangle\ge 0.
\]
Moreover, Assumption~\ref{ass:bounded-features} gives \(\|\delta_{\hat\theta}(s)\|_2\le B\),
hence \(0\le \ell(s)\le B\|\hat\theta\|_2\).

Let \(E:=\{\ell(s)\le \hat\gamma\}\). By \eqref{eq:slack-chance-main}, \(\mathbb{P}_{\mathcal{S}}(E)\ge 1-\varepsilon\),
and therefore
\begin{align}
\mathbb{E}_{\mathcal{S}}[\ell(s)^2]
&=
\mathbb{E}_{\mathcal{S}}[\ell(s)^2\mathbf 1_E]+\mathbb{E}_{\mathcal{S}}[\ell(s)^2\mathbf 1_{E^c}]
\nonumber\\
&\le
\hat\gamma^2\,\mathbb{P}_{\mathcal{S}}(E) + (B\|\hat\theta\|_2)^2\,\mathbb{P}_{\mathcal{S}}(E^c)
\ \le\
\hat\gamma^2 + \varepsilon B^2\|\hat\theta\|_2^2 .
\label{eq:slack-upper-moment}
\end{align}

Let \(M:=\{a_{\hat\theta}(s)\neq a_{\theta^\star}(s)\}\).
If \(\mathbb{P}_{\mathcal{S}}(M)=0\), then \eqref{eq:slack-disagreement-main} holds trivially. Otherwise,
\[
\mathbb{E}_{\mathcal{S}}[\ell(s)^2]
\ \ge\
\mathbb{E}_{\mathcal{S}}[\ell(s)^2\mathbf 1_M]
=
\mathbb{P}_{\mathcal{S}}(M)\,\mathbb{E}_{\mathcal{S}}[\ell(s)^2\mid M].
\]
Since \(\ell(s)=\hat\theta^\top \delta_{\hat\theta}(s)\),
\[
\mathbb{E}_{\mathcal{S}}[\ell(s)^2\mid M]
=
\hat\theta^\top
\mathbb{E}_{\mathcal{S}}\!\left[\delta_{\hat\theta}(s)\delta_{\hat\theta}(s)^\top \mid M\right]
\hat\theta
\ \ge\
\lambda\|\hat\theta\|_2^2,
\]
where the inequality uses Assumption~\ref{ass:persistency} applied to \(\theta=\hat\theta\).
Hence,
\[
\mathbb{E}_{\mathcal{S}}[\ell(s)^2]
\ \ge\ \mathbb{P}_{\mathcal{S}}(M)\,\lambda\|\hat\theta\|_2^2 .
\]
Combining this lower bound with \eqref{eq:slack-upper-moment} yields
\[
\mathbb{P}_{\mathcal{S}}(M)
\ \le\
\frac{\varepsilon B^2}{\lambda}
+
\frac{\hat\gamma^2}{\lambda\|\hat\theta\|_2^2},
\]
which is \eqref{eq:slack-disagreement-main}.

\subsection{Proof of \Cref{lem:EN-incenter}}\label{app:lem_EN-incenter}

To compute $\mathbb E_D[X_N]$, use the tail-integral formula:
\[
\mathbb E_D[X_N]
=
\int_0^1 \mathbb P_D(X_N>\varepsilon)\,d\varepsilon.
\]
By \Cref{prop:margin-mismatch},
\[
\mathbb P_D(X_N>\varepsilon)
\le
\sum_{i=0}^{d-1}\binom{N}{i}\varepsilon^i(1-\varepsilon)^{N-i}.
\]
Hence
\[
\mathbb E_D[X_N]
\le
\sum_{i=0}^{d-1}\binom{N}{i}
\int_0^1 \varepsilon^i(1-\varepsilon)^{N-i}\,d\varepsilon.
\]
The integral is a Beta integral:
\[
\int_0^1 \varepsilon^i(1-\varepsilon)^{N-i}\,d\varepsilon
=
\frac{i!(N-i)!}{(N+1)!}.
\]
Therefore
\[
\binom{N}{i}\cdot\frac{i!(N-i)!}{(N+1)!}
=
\frac{N!}{i!(N-i)!}\cdot \frac{i!(N-i)!}{(N+1)!}
=
\frac{1}{N+1}.
\]
Summing over $i=0,\dots,d-1$, we obtain
\[
\mathbb E_D[X_N] \le \sum_{i=0}^{d-1} \frac{1}{N+1} = \frac{d}{N+1},
\]
which implies \eqref{eq:EN-bound-incenter}
\subsection{Proof of \Cref{thm:scenario-nonexclusion-unique}}\label{app:prf_scenario-nonexclusion-unique}

Consider the chance-constrained program
\begin{equation}
\label{eq:reg-io-ccp}
\min_{\theta\in\Theta}\ J(\theta)
\quad\text{s.t.}\quad
\mathbb{P}_{\mathcal{S}}\big(\ell_\theta^{\mathrm{sub}}(s,a_{\theta^\star}(s))\le 0\big)\ \ge\ 1-\varepsilon.
\end{equation}
For a small $\varepsilon$, any feasible solution of \eqref{eq:reg-io-ccp} is a generalizable result. 
Since $a_t^\star=a_{\theta^\star}(s_t)$, the constraints in~\eqref{eq:reg-io-scp} are exactly the
scenario constraints obtained by sampling $s_t\sim\mathbb{P}_{\mathcal{S}}$ and enforcing
$\ell_\theta^{\mathrm{sub}}(s_t,a_{\theta^\star}(s_t))\le 0$ for all $t\in[T]$.
Therefore, we use the result from \citep[Theorem 1]{campi2008exact}. This theorem states that if $T\ge N(\varepsilon,\beta)$, then with probability at least $1-\beta$ over the draw of
$(s_t)_{t=1}^T$, the unique optimizer $\hat\theta_T^{\mathrm{sub}}$ of the scenario program \eqref{eq:reg-io-scp} is feasible for the chance constraint problem 
\eqref{eq:reg-io-ccp}, i.e.,

\begin{equation}
\label{eq:chance-feasible-unique}
\mathbb{P}_{\mathcal{S}}\big(\ell_{\hat\theta_T^{\mathrm{sub}}}^{\mathrm{sub}}(s,a_{\theta^\star}(s))\le 0\big)\ \ge\ 1-\varepsilon
\end{equation}

By definition, $\ell_\theta^{\mathrm{sub}}(s,a_{\theta^\star}(s))\ge 0$ for all $(\theta,s)$.
Therefore, the event $\{\ell_{\theta}^{\mathrm{sub}}(s,a_{\theta^\star}(s))\le 0\}$ is equivalent to $\{\ell_{\theta}^{\mathrm{sub}}(s,a_{\theta^\star}(s))=0\}$. Moreover,
by~\eqref{eq:suboptimality-gap}, \(\ell_\theta^{\mathrm{sub}}(s,a_{\theta^\star}(s))=0\) is equivalent to \(a_{\theta^\star}(s)\in \mathcal{A}_{\theta}(s).\)
Applying this equivalence to~\eqref{eq:chance-feasible-unique} yields:
\[
\mathbb{P}_{\mathcal{S}}\big(a_{\theta^\star}(s)\in \mathcal{A}_{\hat\theta_T^{\mathrm{sub}}}(s)\big)\ \ge\ 1-\varepsilon,
\]
which is equivalent to~\eqref{eq:nonexclusion-unique}.
It is also possible to make the choice of $\varepsilon$ explicit as a function of $T$. In particular, for any $T\ge 2\big(d+\log(1/\beta)\big)$, the choice
\(
\varepsilon = \frac{2}{T}\big(d + \log(1/\beta)\big)
\)
satisfies $T\ge N(\varepsilon,\beta)$. See \Cref{lem:epsilon_T_order} for the proof.

\subsection{Proof of \Cref{prop:bounded_covariance}}
\label{app:prop_bounded_covariance}
We work on the event (of probability at least $1-\beta$ over the draw of $D_T$) on which the
 set-level guarantee \Cref{thm:scenario-nonexclusion-unique} holds, i.e.,
\begin{equation}
\label{eq:ell-sub-zero-event-final}
\mathbb{P}_{\mathcal{S}}\!\left(\ell_{\hat\theta_T^{\mathrm{sub}}}^{\mathrm{sub}}\big(s,a_{\theta^\star}(s)\big)=0\right)
\ge 1-\varepsilon.
\end{equation}
For brevity, write $\hat\theta:=\hat\theta_T^{\mathrm{sub}}$ and define
\[
\delta_{\hat\theta}(s):=\delta\big(s,a_{\hat\theta}(s)\big),\quad
\ell(s):=\ell_{\hat\theta}^{\mathrm{sub}}\big(s,a_{\theta^\star}(s)\big).
\]
Define the mismatch event $M:=\{a_{\hat\theta}(s)\neq a_{\theta^\star}(s)\}$.
By definition of $\ell_{\hat\theta}^{\mathrm{sub}}$ and the fixed tie-breaking rule
$a_{\hat\theta}(s)\in\arg\max_{a\in\actionset{s}}\langle \hat\theta,\psi(s,a)\rangle$, we have
\begin{equation}
\label{eq:ell-innerprod}
\ell(s)=\big\langle \hat\theta, \delta_{\hat\theta}(s)\big\rangle\ \ge\ 0.
\end{equation}
Moreover, \eqref{eq:ell-sub-zero-event-final} implies $\mathbb{P}_{\mathcal{S}}(\ell(s)>0)\le \varepsilon$, and thus
\[
\mathbb{E}_{\mathcal{S}}[\ell(s)^2]
=
\mathbb{E}_{\mathcal{S}}\!\left[\ell(s)^2\,\mathbf 1\{\ell(s)>0\}\right].
\]
By Cauchy--Schwarz and \Cref{ass:bounded-features},
\[
\ell(s)^2
=
\langle \hat\theta,\delta_{\hat\theta}(s)\rangle^2
\le
\|\hat\theta\|_2^2\,\|\delta_{\hat\theta}(s)\|_2^2
\le
B^2\,\|\hat\theta\|_2^2,
\]
and therefore
\begin{equation}
\label{eq:upper-second-moment-final}
\mathbb{E}_{\mathcal{S}}[\ell(s)^2]
\le
\mathbb{P}_{\mathcal{S}}(\ell(s)>0)\,B^2\|\hat\theta\|_2^2
\le
\varepsilon\,B^2\|\hat\theta\|_2^2.
\end{equation}

If $\mathbb{P}_{\mathcal{S}}(M)=0$, then \eqref{eq:final_bound_new} holds trivially. Assume henceforth that
$\mathbb{P}_{\mathcal{S}}(M)>0$. By nonnegativity of $\ell(s)^2$,
\[
\mathbb{E}_{\mathcal{S}}[\ell(s)^2]
\ge
\mathbb{E}_{\mathcal{S}}\!\left[\ell(s)^2\,\mathbf 1_M\right]
=
\mathbb{P}_{\mathcal{S}}(M)\,\mathbb{E}_{\mathcal{S}}\!\left[\ell(s)^2\ \middle|\ M\right].
\]
Using \eqref{eq:ell-innerprod},
\begin{align*}
\mathbb{E}_{\mathcal{S}}\!\left[\ell(s)^2\ \middle|\ M\right]
&=
\mathbb{E}_{\mathcal{S}}\!\left[(\hat\theta^\top \delta_{\hat\theta}(s))^2\ \middle|\ M\right]=
\hat\theta^\top
\mathbb{E}_{\mathcal{S}}\!\left[\delta_{\hat\theta}(s)\delta_{\hat\theta}(s)^\top \middle|\ M\right]
\hat\theta
\end{align*}

By \Cref{ass:persistency} we have
$\mathbb{E}_{\mathcal{S}}\!\left[\delta_{\hat\theta}(s)\delta_{\hat\theta}(s)^\top \middle|\ M\right]\succeq \lambda I_d$, and hence
\begin{equation}
\label{eq:lower-second-moment-final}
\mathbb{E}_{\mathcal{S}}[\ell(s)^2]
\ge
\mathbb{P}_{\mathcal{S}}(M)\,\hat\theta^\top  \lambda I_d \; \hat\theta
\ge
\mathbb{P}_{\mathcal{S}}(M)\,\lambda\,\|\hat\theta\|_2^2.
\end{equation}
Combining \eqref{eq:upper-second-moment-final} and \eqref{eq:lower-second-moment-final} yields
\[
\mathbb{P}_{\mathcal{S}}(M)\,\lambda\,\|\hat\theta\|_2^2
\le
\varepsilon\,B^2\|\hat\theta\|_2^2.
\]
Since $0\notin\Theta$, we have $\|\hat\theta\|_2>0$ and may cancel $\|\hat\theta\|_2^2$ to obtain
\[
\mathbb{P}_{\mathcal{S}}(M)\le \frac{\varepsilon B^2}{\lambda},
\]
which is \eqref{eq:final_bound_new}.
\subsection{Proof of \Cref{prop:margin-mismatch}}\label{app:prf_margin-mismatch}

Consider the chance-constrained problem
\[
\min_{\theta\in\mathbb R^d} J(\theta)
\quad\text{s.t.}\quad
\mathbb{P}_{\mathcal{S}}\!\left(\ell_\theta^{\mathrm{in}}(s,a_{\theta^\star}(s)) \le 0\right)\ge 1-\varepsilon,
\]
whose associated scenario program is \eqref{eq:scp-incenter}. For brevity, write
$\hat\theta:=\hat\theta_T^{\mathrm{in}}$ and
\(
\ell(s):=\ell_{\hat\theta}^{\mathrm{in}}\big(s,a_{\theta^\star}(s)\big).
\)
By \citep[Theorem~1]{campi2008exact}, if $T\ge N(\varepsilon,\beta)$, with probability
at least $1-\beta$ over the draw of the training sample $D_T$,
\begin{equation}
\label{eq:chance-feasible}
\mathbb{P}_{\mathcal{S}}\!\left(\ell(s)\le 0\right)\ \ge\ 1-\varepsilon.
\end{equation}
We now show that, for every $s\in\mathcal S$,
\begin{equation}
\label{eq:goodset_implies_joint}
\ell(s)\le 0
\ \Longrightarrow\
\Big(|\mathcal A_{\hat\theta}(s)|=1 \,\,\, and \,\,\, a_{\hat\theta}(s)=a_{\theta^\star}(s)\Big).
\end{equation}
Fix any $s$ with $\ell(s)\le 0$. By the definition of $\ell$ in \eqref{eqn:incenter_loss},
for every $a\in\actionset{s}$,
\begin{equation}
\label{eq:pointwise-margin}
\langle \hat\theta, \delta(s,a)\rangle \le -\|\delta(s,a)\|.
\end{equation}
For $a=a_{\theta^\star}(s)$ we have $\delta(s,a)=0$, hence \eqref{eq:pointwise-margin} holds
with equality. If $a\neq a_{\theta^\star}(s)$, then \Cref{ass:injectivity}
implies $\delta(s,a)\neq 0$, so $\|\delta(s,a)\|>0$ and \eqref{eq:pointwise-margin} is strict:
$\langle \hat\theta,\delta(s,a)\rangle<0$. Equivalently,
\[
\langle \hat\theta,\psi(s,a)\rangle
<
\langle \hat\theta,\psi(s,a_{\theta^\star}(s))\rangle,
\qquad \forall a\neq a_{\theta^\star}(s).
\]
Thus $a_{\theta^\star}(s)$ is the unique maximizer of $a\mapsto\langle \hat\theta,\psi(s,a)\rangle$,
which proves \eqref{eq:goodset_implies_joint}.
Combining \eqref{eq:chance-feasible} and \eqref{eq:goodset_implies_joint} yields, on the same
event of probability at least $1-\beta$ over $D_T$,
\begin{align*}
\mathbb{P}_{\mathcal {S}}\!\left(
|\mathcal A_{\hat\theta}(s)|=1 \,\,\, and \,\,\, a_{\hat\theta}(s)=a_{\theta^\star}(s)
\right) \ge
\mathbb{P}_{\mathcal{S}}\!\left(\ell(s)\le 0\right) \ge 1-\varepsilon.
\end{align*}
This is the claimed guarantee.

\subsection{Proof of \Cref{thm:instantaneous-regret-incenter}}
\label{app:online-proof-main}

\begin{proposition}[Regret upper bound (complete version)]
Suppose Assumptions~\ref{ass:bounded-features} and \ref{ass:injectivity} hold. Consider the
stochastic online protocol based on the incenter estimator, where
$a_t=a_{\hat\theta^{\mathrm{in}}_{t-1}}(s_t)$. Fix $\beta\in(0,1)$ and let
$\varepsilon\in(0,1]$ satisfy $t-1\ge N(\varepsilon,\beta)$. Then, with
probability at least $1-\beta$ over the draw of $D_{t-1}$,
\begin{align}
\label{eq:instantaneous-regret-hp}
    \mathbb{E}[r_t\mid D_{t-1}]
    \le
    \|\theta^\star\|_2\,B\,\varepsilon .
\end{align}
In particular, for every $t\ge d+1$,
\begin{align}
\label{eq:instantaneous-regret-expectation}
    \mathbb{E}[r_t]
    \le
    \|\theta^\star\|_2\,B\,\frac{d}{t}.
\end{align}
Consequently, for every $T\ge d+1$,
\begin{align}
\label{eq:online-cum-regret-expectation}
    \mathbb{E}[R_T]
    =
    \mathcal{O}\!\left(
    \|\theta^\star\|_2\,B\,d
    \left(1+\log \frac{T}{d}\right)
    \right).
\end{align}
Moreover, for every $\delta\in(0,1)$ and $T\ge d+1$, with probability at least
$1-\delta$ over the online trajectory,
\begin{equation}
\label{eq:online-hp-regret-O}
R_T
=
\mathcal{O}\!\left(
\|\theta^\star\|_2\,B\,
\left(d+\log\frac{T}{\delta}\right)
\left(1+\log \frac{T}{d}\right)
\right).
\end{equation}
\end{proposition}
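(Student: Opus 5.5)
The plan is to prove the four claims in sequence, each building on the previous one.

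\textbf{Instantaneous regret.} First I bound the regret pointwise by mismatch. For every context $s$, if $a_{\hat\theta^{\mathrm{in}}_{t-1}}(s)=a_{\theta^\star}(s)$ then the regret is zero. Otherwise Cauchy--Schwarz and \Cref{ass:bounded-features} give
\[
0\le \langle\theta^\star,\psi(s,a_{\theta^\star}(s))-\psi(s,a)\rangle\le\|\theta^\star\|_2B .
\]
Since $s_t\perp\!\!\!\perp\mathcal F_{t-1}$, this yields
\[
\mathbb E[r_t\mid D_{t-1}]\le \|\theta^\star\|_2B\,X_{t-1}.
\]
Applying \Cref{prop:margin-mismatch} with $t-1$ samples, on an event of probability at least $1-\beta$ we have $X_{t-1}\le\varepsilon$, which is \eqref{eq:instantaneous-regret-hp}. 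Taking full expectations and invoking \Cref{lem:EN-incenter} with $N=t-1\ge d$ gives
\[
\mathbb E[r_t]\le\|\theta^\star\|_2B\,d/t,
\]
which is \eqref{eq:instantaneous-regret-expectation}.

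\textbf{Expected cumulative regret.} For rounds $t\le d$ I use the trivial bound $\|\theta^\star\|_2B$ per round, contributing $d\|\theta^\star\|_2B$ in total. For $t=d+1,\dots,T$ I sum the bound $d/t$, and use
\[
\sum_{t=d+1}^T \frac1t\le \log\frac{T}{d}.
\]
Together these give \eqref{eq:online-cum-regret-expectation}.

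\textbf{High-probability cumulative regret.} Here I use a martingale decomposition. Set
\[
Z_t:=r_t-\mathbb E[r_t\mid\mathcal F_{t-1}],
\]
which satisfies $|Z_t|\le b:=\|\theta^\star\|_2B$. Its conditional variance obeys
\[
\mathbb E[Z_t^2\mid\mathcal F_{t-1}]\le b\,\mathbb E[r_t\mid\mathcal F_{t-1}]\le b^2X_{t-1}.
\]
So both the compensator $\sum_t\mathbb E[r_t\mid\mathcal F_{t-1}]$ and the variance $V_T$ are controlled by $b\sum_t X_{t-1}$. I bound $\sum_t X_{t-1}$ uniformly in time. For each $t\ge d+1$ I apply \Cref{prop:margin-mismatch} with $\beta_t:=\delta/(2T)$ and, via \Cref{lem:epsilon_T_order},
\[
\varepsilon_t=\min\Bigl\{1,\tfrac{2}{t-1}\bigl(d+\log(2T/\delta)\bigr)\Bigr\}.
\]
A union bound over $t\le T$ gives, with probability at least $1-\delta/2$,
\[
\sum_t X_{t-1}\le d+\sum_t\varepsilon_t=\mathcal O\bigl((d+\log(T/\delta))(1+\log(T/d))\bigr)=:v/b^2.
\]
Next I apply Freedman's inequality (\Cref{lem:freedman}) with this deterministic $v$ and confidence $\delta/2$. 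On the intersection of the two events,
\[
R_T\le b\,\frac{v}{b^2}+\sqrt{2v\log(2/\delta)}+\tfrac23 b\log(2/\delta).
\]
Finally, AM--GM absorbs the square-root term into $b\,v/b^2+b\log(2/\delta)$, which gives \eqref{eq:online-hp-regret-O}.

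\textbf{Main obstacle.} The delicate step is the high-probability bound. Freedman requires a deterministic variance level $v$, whereas $V_T$ is random. I handle this by first establishing the union-bounded event on which $\sum X_{t-1}$ is small, so that the ``and $V_T\le v$'' clause in \Cref{lem:freedman} is automatically satisfied there. I also need to check that rounds with $t-1<N(\varepsilon_t,\beta_t)$ are covered by the trivial bound; this is what the $\min\{1,\cdot\}$ in $\varepsilon_t$ accomplishes, and it costs only an additive $\mathcal O(d+\log(T/\delta))$.
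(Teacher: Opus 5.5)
Your proposal is correct and follows essentially the same route as the paper. You bound $r_t\le \|\theta^\star\|_2B\,\mathbf 1\{a_t\neq a_t^\star\}$ pointwise, use \Cref{prop:margin-mismatch} and \Cref{lem:EN-incenter} for the instantaneous bounds, split the harmonic sum at $d$, and for the high-probability bound take a union bound with $\beta_t=\delta/(2T)$ followed by Freedman's inequality on that event; the only cosmetic differences are that you handle the burn-in through $\varepsilon_t=\min\{1,\cdot\}$ instead of the paper's explicit cutoff $t_0$, and you absorb the square-root term by AM--GM rather than leaving it inside the $\mathcal O(\cdot)$.
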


\begin{proof}
By \Cref{ass:bounded-features}, we have
$0\le r_t\le \|\theta^\star\|_2\,B$ for all $t$.
Moreover, $r_t=0$ whenever $a_t=a_t^\star$.
Conditioning on $D_{t-1}$, $\hat\theta^{\mathrm{in}}_{t-1}$ is fixed, and
$s_t\sim\mathbb{P}_{\mathcal S}$ independently of $D_{t-1}$. 
Then,
\begin{align}
\label{eq:cond-rt-incenter}
\begin{aligned}
\mathbb{E}[r_t\mid D_{t-1}]
&=
\mathbb{E}\!\left[
r_t\,\mathbf{1}\{a_t\neq a_t^\star\}\,\middle|\,D_{t-1}
\right] \\
&\le
\|\theta^\star\|_2\,B\,
\mathbb{P}_{\mathcal S}\!\left(
a_{\hat\theta^{\mathrm{in}}_{t-1}}(s)\neq a_{\theta^\star}(s)
\right)
=
\|\theta^\star\|_2\,B\,X_{t-1}.
\end{aligned}
\end{align}
By \Cref{prop:margin-mismatch}, if $t-1\ge N(\varepsilon,\beta)$, then
$X_{t-1}\le \varepsilon$ with probability at least $1-\beta$ over $D_{t-1}$.
This proves \eqref{eq:instantaneous-regret-hp}.

Taking expectations in \eqref{eq:cond-rt-incenter} yields
\[
\mathbb{E}[r_t]
\le
\|\theta^\star\|_2\,B\,\mathbb{E}_D[X_{t-1}].
\]
For $t\ge d+1$, \Cref{lem:EN-incenter} gives
\[
\mathbb{E}_D[X_{t-1}]
\le
\frac{d}{t},
\]
and hence
\[
\mathbb{E}[r_t]
\le
\|\theta^\star\|_2\,B\,\frac{d}{t}.
\]
This proves \eqref{eq:instantaneous-regret-expectation}.

For cumulative regret, split the sum at $d$:
\[
\mathbb{E}[R_T]
=
\sum_{t=1}^{\min\{d,T\}}\mathbb{E}[r_t]
+
\sum_{t=d+1}^{T}\mathbb{E}[r_t].
\]
For $t\le d$, $\mathbb{E}[r_t]\le \|\theta^\star\|_2\,B$, and therefore
\[
\sum_{t=1}^{\min\{d,T\}}\mathbb{E}[r_t]
\le
d\,\|\theta^\star\|_2\,B.
\]
For $t\ge d+1$, using \eqref{eq:instantaneous-regret-expectation},
\[
\sum_{t=d+1}^{T}\mathbb{E}[r_t]
\le
\|\theta^\star\|_2\,B\,
d\sum_{t=d+1}^{T}\frac{1}{t}
\le
\|\theta^\star\|_2\,B\,d\log\frac{T}{d}.
\]
Thus,
\[
\mathbb{E}[R_T]
=
\mathcal{O}\!\left(
\|\theta^\star\|_2\,B\,d
\left(1+\log\frac{T}{d}\right)
\right),
\]
which proves \eqref{eq:online-cum-regret-expectation}.

It remains to prove the high-probability cumulative regret bound. Define
\(
\mu_t:=\mathbb{E}[r_t\mid \mathcal{F}_{t-1}]
\)
and the martingale
\[
M_t:=\sum_{i=1}^t (r_i-\mu_i),\qquad M_0=0.
\]
Then
\begin{equation}
\label{eq:RT-decomp-incenter}
R_T=\sum_{t=1}^T \mu_t + M_T.
\end{equation}
From \eqref{eq:cond-rt-incenter}, we have
\(
\mu_t\le \|\theta^\star\|_2\,B\,X_{t-1}.
\)

Fix $\delta\in(0,1)$ and set
\[
\bar\beta:=\frac{\delta}{2T},
\qquad
t_0
:=
\min\!\Big\{
T+1,\ 
\big\lceil 2\big(d+\log\tfrac{2T}{\delta}\big)\big\rceil+1
\Big\}.
\]
For any $t\in\{t_0,\dots,T\}$, if this index set is nonempty, we have
\(
t-1\ge 2\big(d+\log(1/\bar\beta)\big).
\)
Applying \Cref{prop:margin-mismatch} with sample size $t-1$ and confidence
$\bar\beta$, together with \Cref{lem:epsilon_T_order}, gives
\[
\mathbb{P}\!\Big(
X_{t-1}
\le
2\,\frac{d+\log(1/\bar\beta)}{t-1}
\Big)
\ge
1-\bar\beta,
\qquad t=t_0,\dots,T.
\]
A union bound gives an event $E$ with $\mathbb{P}(E)\ge 1-\delta/2$ on which
\begin{equation}
\label{eq:uniform-X-incenter}
\forall\,t\in\{t_0,\dots,T\}:\ 
X_{t-1}
\le
2\,\frac{d+\log\frac{2T}{\delta}}{t-1}.
\end{equation}
For $t<t_0$ we use the trivial bound $X_{t-1}\le 1$. Therefore, on $E$,
\begin{equation}
\label{eq:sum-mu-incenter}
\sum_{t=1}^T \mu_t
\le
\|\theta^\star\|_2\,B
\Bigg[
(t_0-1)
+
2\Big(d+\log\frac{2T}{\delta}\Big)
\sum_{t=t_0}^T \frac{1}{t-1}
\Bigg],
\end{equation}
where the sum is interpreted as zero if $t_0=T+1$.

Let $Z_t:=M_t-M_{t-1}=r_t-\mu_t$. Then
$|Z_t|\le \|\theta^\star\|_2\,B$ almost surely. Moreover,
\[
\mathbb{E}[Z_t^2\mid \mathcal{F}_{t-1}]
=
\operatorname{Var}(r_t\mid \mathcal{F}_{t-1})
\le
\mathbb{E}[r_t^2\mid \mathcal{F}_{t-1}]
\le
(\|\theta^\star\|_2\,B)^2 X_{t-1}.
\]
Thus, on $E$, the predictable quadratic variation satisfies
\begin{equation}
\label{eq:VT-incenter}
V_T
:=
\sum_{t=1}^T
\mathbb{E}[Z_t^2\mid \mathcal{F}_{t-1}]
\le
(\|\theta^\star\|_2\,B)^2
\Bigg[
(t_0-1)
+
2\Big(d+\log\frac{2T}{\delta}\Big)
\sum_{t=t_0}^T \frac{1}{t-1}
\Bigg].
\end{equation}

Set
\[
H_T
:=
(t_0-1)
+
2\Big(d+\log\frac{2T}{\delta}\Big)
\sum_{t=t_0}^T \frac{1}{t-1}.
\]
Applying \Cref{lem:freedman} with confidence $\delta/2$,
$b=\|\theta^\star\|_2B$, and 
\(
v=(\|\theta^\star\|_2B)^2 H_T
\),
gives
\[
\mathbb{P}\!\left(
M_T
>
\|\theta^\star\|_2 B
\sqrt{2H_T\log\frac{2}{\delta}}
+
\frac{2}{3}\|\theta^\star\|_2 B\log\frac{2}{\delta}
\ \text{ and }\ 
V_T\le (\|\theta^\star\|_2B)^2H_T
\right)
\le \frac{\delta}{2}.
\]
Since, by \eqref{eq:VT-incenter}, the event $E$ implies
$V_T\le(\|\theta^\star\|_2B)^2H_T$, we have
\[
\mathbb{P}\!\left(
E\cap
\left\{
M_T
>
\|\theta^\star\|_2 B
\sqrt{2H_T\log\frac{2}{\delta}}
+
\frac{2}{3}\|\theta^\star\|_2 B\log\frac{2}{\delta}
\right\}
\right)
\le \frac{\delta}{2}.
\]
Since also $\mathbb{P}(E^c)\le \delta/2$, a union bound gives that, with probability at least
$1-\delta$, the event $E$ holds and
\[
M_T
\le
\|\theta^\star\|_2 B
\sqrt{2H_T\log\frac{2}{\delta}}
+
\frac{2}{3}\|\theta^\star\|_2 B\log\frac{2}{\delta}.
\]
On this event, combining
\eqref{eq:RT-decomp-incenter}, \eqref{eq:sum-mu-incenter}, and
\eqref{eq:VT-incenter} gives
\begin{align}
\label{eq:hp-regret}
R_T
&\le
\|\theta^\star\|_2 B
\Bigg[
(t_0-1)
+
2\Big(d+\log\!\frac{2T}{\delta}\Big)
\sum_{t=t_0}^T \frac{1}{t-1}
\Bigg]
\nonumber\\
&\quad+
\|\theta^\star\|_2 B
\sqrt{
2
\Bigg[
(t_0-1)
+
2\Big(d+\log\!\frac{2T}{\delta}\Big)
\sum_{t=t_0}^T \frac{1}{t-1}
\Bigg]
\log\!\frac{2}{\delta}
}
\nonumber\\
&\quad+
\frac{2}{3}\,\|\theta^\star\|_2 B\,
\log\!\frac{2}{\delta}.
\end{align}
Finally, since
\[
t_0-1
=
\mathcal{O}\!\left(d+\log\frac{T}{\delta}\right)
\qquad\text{and}\qquad
\sum_{t=t_0}^T \frac{1}{t-1}
\le
1+\log\frac{T}{d},
\]
we obtain
\[
R_T
=
\mathcal{O}\!\left(
\|\theta^\star\|_2\,B\,
\left(d+\log\frac{T}{\delta}\right)
\left(1+\log\frac{T}{d}\right)
\right),
\]
which proves \eqref{eq:online-hp-regret-O}.
\end{proof}

\subsection{Proof of \Cref{thm:scenario-tightness-action-level}}

Because the tie-breaking rule chooses $a_{\hat\theta_T^{\mathrm{sub}}}(s)\in \mathcal{A}_{\hat\theta_T^{\mathrm{sub}}}(s)$, we always have 
\[
\mathbf 1\Bigl\{a_{\theta^\star}(s) \neq a_{\hat\theta_T^{\mathrm{sub}}}(s)\Bigr\}
\ge
\mathbf 1\Bigl\{a_{\theta^\star}(s)\notin \mathcal{A}_{\hat\theta_T^{\mathrm{sub}}}(s)\Bigr\}.
\]
So we have
\[
\mathbb{P}_{\mathcal{S}}\big(a_{\theta^\star}(s)\neq a_{\hat\theta_T^{\mathrm{sub}}}(s)\big) 
\ge
\mathbb{P}_{\mathcal{S}}\big(a_{\theta^\star}(s)\notin \mathcal{A}_{\hat\theta_T^{\mathrm{sub}}}(s)\big),
\]
which gives us
\[
\mathbb{P}_D\!\left(
\mathbb{P}_{\mathcal{S}}\big(a_{\theta^\star}(s)\neq a_{\hat\theta_T^{\mathrm{sub}}}(s)\big) > \varepsilon
\right) \ge \mathbb{P}_D\!\left(
\mathbb{P}_{\mathcal{S}}\big(a_{\theta^\star}(s)\notin \mathcal{A}_{\hat\theta_T^{\mathrm{sub}}}(s)\big) > \varepsilon
\right) 
\]
Then we use \Cref{thm:scenario-tightness-continuous}, which concludes the result. 

\subsection{Proof of \Cref{thm:scenario-tightness-regret}}
\label{proof:scenario-tightness-regret}

Consider the IO instance constructed in
\Cref{thm:scenario-tightness-continuous}. In this construction,
$\theta^\star=(1,\bm 0)$ and $a_{\theta^\star}(s)=0$ for every
$s\in\mathcal S$. Moreover,
\[
F_{\theta^\star}(s,a)=-2|a|+a,
\]
so
\[
F_{\theta^\star}(s,0)-F_{\theta^\star}(s,1)=1,
\qquad
F_{\theta^\star}(s,0)-F_{\theta^\star}(s,-1)=3.
\]

Fix a round $t\ge d+1$ and condition on $D_{t-1}$. Then
$\hat\theta_{t-1}^{\mathrm{sub}}$ is fixed. Write
$\hat\theta:=\hat\theta_{t-1}^{\mathrm{sub}}=(1,\hat\theta_{-1})$ and 
For a fresh state $s$, let $x=s^\top\hat\theta_{-1}$. From the construction in
\Cref{thm:scenario-tightness-continuous}, the learned score is
\[
F_{\hat\theta}(s,a)=-2|a|+a+a x .
\]
Hence the unique greedy action under $\hat\theta$ is $1$ when $x>1$, is $-1$
when $x<-3$, and is $0$ when $-3<x<1$. The boundary events $\{x=1\}$ and
$\{x=-3\}$ have $\mathbb{P}_{\mathcal S}$-measure zero, since
$\mathbb{P}_{\mathcal S}$ is absolutely continuous on the sphere. Therefore,
$\mathbb{P}_{\mathcal S}$-almost surely, the action-level mismatch event
$\{a_{\hat\theta}(s)\neq a_{\theta^\star}(s)\}$ implies
$a_{\hat\theta}(s)\in\{1,-1\}$. On this event, the regret is either $1$ or $3$,
and hence is at least $1$. Consequently,
\[
r_t(s)
\ge
\mathbf{1}\!\left\{
a_{\hat\theta_{t-1}^{\mathrm{sub}}}(s)\neq a_{\theta^\star}(s)
\right\}
\qquad
\mathbb{P}_{\mathcal S}\text{-a.s.}
\]
Since $s_t\sim\mathbb{P}_{\mathcal S}$ independently of $D_{t-1}$, it follows
that
\begin{align}
\label{eq:regret-lower-cond}
\mathbb{E}[r_t\mid D_{t-1}]
&\ge
\mathbb{P}_{\mathcal S}\!\left(
a_{\hat\theta_{t-1}^{\mathrm{sub}}}(s)\neq a_{\theta^\star}(s)
\right)
=
X_{t-1}.
\end{align}

Now apply \Cref{thm:scenario-tightness-action-level} with sample size $t-1$.
For the constructed IO instance, and for every $J\in\mathcal J$,
\[
\mathbb{P}_{D_{t-1}}\!\left(
X_{t-1}>\varepsilon
\right)
\ge
\sum_{i=0}^{d-1}\binom{t-1}{i}
\varepsilon^{i}(1-\varepsilon)^{t-1-i}.
\]
Combining this with \eqref{eq:regret-lower-cond} gives
\[
\mathbb{P}_{D_{t-1}}\!\left(
\mathbb{E}[r_t\mid D_{t-1}]\ge \varepsilon
\right)
\ge
\sum_{i=0}^{d-1}\binom{t-1}{i}
\varepsilon^{i}(1-\varepsilon)^{t-1-i}.
\]
Taking the infimum over $J\in\mathcal J$ and then the supremum over
$(\theta^\star,\mathbb{P}_{\mathcal S})$ proves the first claim.

We now lower bound the expected instantaneous regret. By
\eqref{eq:regret-lower-cond},
\[
\mathbb{E}[r_t]\ge \mathbb{E}_{D_{t-1}}[X_{t-1}].
\]
Using the tail-integral formula and the action-level lower bound above,
\[
\mathbb{E}_{D_{t-1}}[X_{t-1}]
=
\int_0^1
\mathbb{P}_{D_{t-1}}(X_{t-1}>\varepsilon)\,d\varepsilon
\ge
\sum_{i=0}^{d-1}\binom{t-1}{i}
\int_0^1
\varepsilon^i(1-\varepsilon)^{t-1-i}\,d\varepsilon.
\]
The integral is a Beta integral:
\[
\int_0^1
\varepsilon^i(1-\varepsilon)^{t-1-i}\,d\varepsilon
=
\frac{i!(t-1-i)!}{t!}.
\]
Therefore,
\[
\binom{t-1}{i}
\frac{i!(t-1-i)!}{t!}
=
\frac{1}{t}.
\]
Summing over $i=0,\dots,d-1$, we obtain
\[
\mathbb{E}_{D_{t-1}}[X_{t-1}]
\ge
\frac{d}{t}.
\]
Hence, for every $t\ge d+1$,
\[
\mathbb{E}[r_t]\ge \frac{d}{t}.
\]

Finally, since regret is nonnegative,
\[
\mathbb{E}[R_T]
=
\sum_{t=1}^T\mathbb{E}[r_t]
\ge
\sum_{t=d+1}^T\mathbb{E}[r_t]
\ge
d\sum_{t=d+1}^{T}\frac{1}{t}.
\]
Moreover,
\[
\sum_{t=d+1}^{T}\frac{1}{t}
\ge
\int_{d+1}^{T+1}\frac{dx}{x}
=
\log\frac{T+1}{d+1}.
\]
Therefore,
\[
\mathbb{E}[R_T]\ge d\log\frac{T+1}{d+1}.
\]
Since the constructed IO instance satisfies the above lower bounds for every
$J\in\mathcal J$, taking the infimum over $J\in\mathcal J$ and then the
supremum over $(\theta^\star,\mathbb{P}_{\mathcal S})$ gives the claimed
result.

\clearpage

\bibliography{ref}

\end{document}